  \providecommand\BibTeX{{%
    \normalfont B\kern-0.5em{\scshape i\kern-0.25em b}\kern-0.8em\TeX}}}
\DeclareMathOperator*{\argmin}{argmin}
\newcommand{\ip}[2]{\left\langle #1, #2 \right \rangle}
\newtheorem{problem}{Problem}[section]
\begin{document}

%%
%% The "title" command has an optional parameter,
%% allowing the author to define a "short title" to be used in page headers.
\title{HyperAid: Denoising in hyperbolic spaces for tree-fitting and hierarchical clustering}

%%
%% The "author" command and its associated commands are used to define
%% the authors and their affiliations.
%% Of note is the shared affiliation of the first two authors, and the
%% "authornote" and "authornotemark" commands
%% used to denote shared contribution to the research.
% \author{Ben Trovato}
% \authornote{Both authors contributed equally to this research.}
% \email{trovato@corporation.com}
% \orcid{1234-5678-9012}
% \author{G.K.M. Tobin}
% \authornotemark[1]
% \email{webmaster@marysville-ohio.com}
% \affiliation{%
%   \institution{Institute for Clarity in Documentation}
%   \streetaddress{P.O. Box 1212}
%   \city{Dublin}
%   \state{Ohio}
%   \country{USA}
%   \postcode{43017-6221}
% }

\author{Eli Chien, Puoya Tabaghi, Olgica Milenkovic}
\affiliation{%
  \institution{University of Illinois Urbana-Champaign}
%   \city{Urbana}
  \country{}
}
\email{{ichien3,tabaghi2,milenkov}@illinois.edu}

% \author{Valerie B\'eranger}
% \affiliation{%
%   \institution{Inria Paris-Rocquencourt}
%   \city{Rocquencourt}
%   \country{France}
% }

% \author{Aparna Patel}
% \affiliation{%
%  \institution{Rajiv Gandhi University}
%  \streetaddress{Rono-Hills}
%  \city{Doimukh}
%  \state{Arunachal Pradesh}
%  \country{India}}

% \author{Huifen Chan}
% \affiliation{%
%   \institution{Tsinghua University}
%   \streetaddress{30 Shuangqing Rd}
%   \city{Haidian Qu}
%   \state{Beijing Shi}
%   \country{China}}

% \author{Charles Palmer}
% \affiliation{%
%   \institution{Palmer Research Laboratories}
%   \streetaddress{8600 Datapoint Drive}
%   \city{San Antonio}
%   \state{Texas}
%   \country{USA}
%   \postcode{78229}}
% \email{cpalmer@prl.com}

% \author{John Smith}
% \affiliation{%
%   \institution{The Th{\o}rv{\"a}ld Group}
%   \streetaddress{1 Th{\o}rv{\"a}ld Circle}
%   \city{Hekla}
%   \country{Iceland}}
% \email{jsmith@affiliation.org}

% \author{Julius P. Kumquat}
% \affiliation{%
%   \institution{The Kumquat Consortium}
%   \city{New York}
%   \country{USA}}
% \email{jpkumquat@consortium.net}

%%
%% By default, the full list of authors will be used in the page
%% headers. Often, this list is too long, and will overlap
%% other information printed in the page headers. This command allows
%% the author to define a more concise list
%% of authors' names for this purpose.
\renewcommand{\shortauthors}{Chien, et al.}

%%
%% The abstract is a short summary of the work to be presented in the
%% article.
\begin{abstract}
  The problem of fitting distances by tree-metrics has received significant attention in the theoretical computer science and machine learning communities alike, due to many applications in natural language processing, phylogeny, cancer genomics and a myriad of problem areas that involve hierarchical clustering. Despite the existence of several provably exact algorithms for tree-metric fitting of data that inherently obeys tree-metric constraints, much less is known about how to best fit tree-metrics for data whose structure moderately (or substantially) differs from a tree. For such noisy data, most available algorithms perform poorly and often produce negative edge weights in representative trees. Furthermore, it is currently not known how to choose the most suitable approximation objective for noisy fitting. Our contributions are as follows. First, we propose a new approach to tree-metric denoising (HyperAid) in hyperbolic spaces which transforms the original data into data that is ``more'' tree-like, when evaluated in terms of Gromov's $\delta$ hyperbolicity. Second, we perform an ablation study involving two choices for the approximation objective, $\ell_p$ norms and the Dasgupta loss. Third, we integrate HyperAid with schemes for enforcing nonnegative edge-weights. As a result, the HyperAid platform outperforms all other existing methods in the literature, including Neighbor Joining (NJ), TreeRep and T-REX, both on synthetic and real-world data. Synthetic data is represented by edge-augmented trees and shortest-distance metrics while the real-world datasets include Zoo, Iris, Glass, Segmentation and SpamBase; on these datasets, the average improvement with respect to NJ is $125.94\%$. Our code is publicly available.\footnote{\url{https://github.com/elichienxD/HyperAid}} 
\end{abstract}
%%
%% The code below is generated by the tool at http://dl.acm.org/ccs.cfm.
%% Please copy and paste the code instead of the example below.
%%
\begin{CCSXML}
<ccs2012>
   <concept>
       <concept_id>10010147.10010257</concept_id>
       <concept_desc>Computing methodologies~Machine learning</concept_desc>
       <concept_significance>500</concept_significance>
       </concept>
   <concept>
       <concept_id>10002950.10003624.10003633.10003634</concept_id>
       <concept_desc>Mathematics of computing~Trees</concept_desc>
       <concept_significance>500</concept_significance>
       </concept>
 </ccs2012>
\end{CCSXML}

\ccsdesc[500]{Computing methodologies~Machine learning}
\ccsdesc[500]{Mathematics of computing~Trees}

%%
%% Keywords. The author(s) should pick words that accurately describe
%% the work being presented. Separate the keywords with commas.
\keywords{Hierachical Clustering, Tree-metric, Hyperbolic}

%% A "teaser" image appears between the author and affiliation
%% information and the body of the document, and typically spans the
%% page.
% \begin{teaserfigure}
%   \includegraphics[trim={1cm 5cm 1cm 8cm},clip,width=\textwidth]{}
%   \caption{Illustration of our HyperAid framework.}
% %   \Description{Enjoying the baseball game from the third-base
% %   seats. Ichiro Suzuki preparing to bat.}
%   \label{fig:teaser}
% \end{teaserfigure}

\begin{teaserfigure}
  \includegraphics[clip,width=\textwidth]{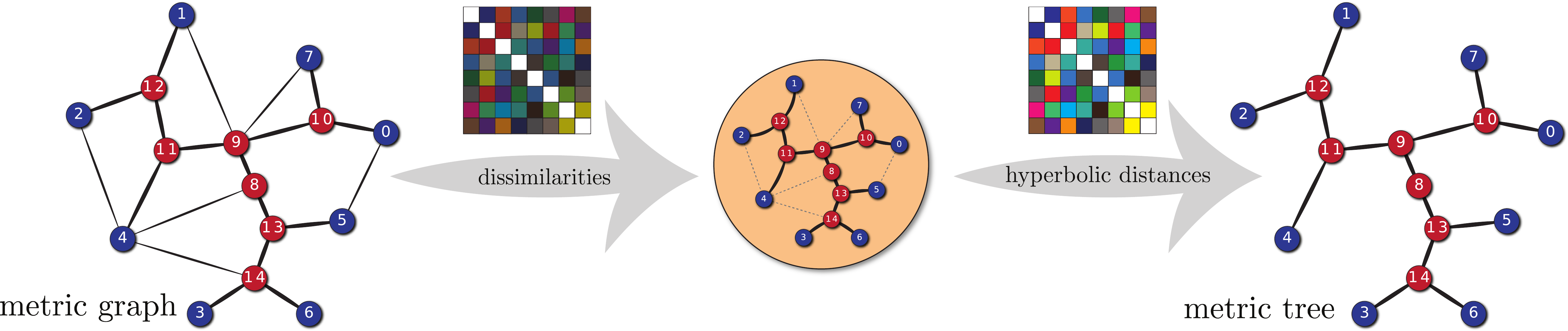}
  \caption{The HyperAid framework.}
%   \Description{Enjoying the baseball game from the third-base
%   seats. Ichiro Suzuki preparing to bat.}
  \label{fig:teaser}
\end{teaserfigure}

%%
%% This command processes the author and affiliation and title
%% information and builds the first part of the formatted document.
\maketitle

\section{Introduction}
The problem of fitting tree-distances between data points is of great relevance in application areas concerned with extracting and processing hierarchical information~\cite{saitou1987neighbor,ailon2005fitting,macaluso2021cluster,coenen2019visualizing,hewitt2019structural}. A significant body of work on the topic has focused on fitting phylogenetic trees based on evolutionary distances~\cite{saitou1987neighbor} and using fitted tree-models for historic linguistics and natural language processing~\cite{coenen2019visualizing,hewitt2019structural}. Since trees are usually hard to accurately represent in low-dimensional Euclidean spaces~\cite{linial1995geometry}, alternative methods have been suggested for representation learning in hyperbolic spaces~\cite{nickel2017poincare}. For these and related approaches, it has been observed that the quality of embedding and hence the representation error depend on the so called Gromov $\delta$-hyperbolicity~\cite{gromov1987hyperbolic} of the data, which will be discussed in-depth in the following section. For perfect trees, $\delta=0$, and the larger the value of $\delta$ the less tree-like the data structure. When $\delta$ is sufficiently large, as is the case when dealing with noisy observations or observations with outliers, the embedding distortion may be significant~\cite{abraham2007reconstructing}. This is a well-known fact that has limited the performance of methods such as Neighbor Joining (NJ) and TreeRep, which come with provable reconstruction guarantees only in the presence of small tree-metric perturbations/noise~\cite{saitou1987neighbor,sonthalia2020tree} (more precisely, the correct output tree topology is guaranteed for ``nearly additive'' distance matrices, i.e., matrices in which every distance differs from the true distance by not more than half of the shortest edge weight in the tree). Heuristic methods for tree-metric fitting, such as NINJA~\cite{wheeler2009large} and T-REX~\cite{boc2012t} have similar issues, but address the problems of scaling, and in the latter case, the issue of negative tree-edge weights that may arise due to large $\delta$-hyperbolicity.

In a parallel development, the theoretical computer science community has explored the problem of fitting both general tree-metrics and ultrametrics with specialized distortion objectives~\cite{ailon2005fitting,harb2005approximating,cohen2021fitting}. Ultrametrics are induced by rooted trees for which all the root-to-leaf distances are the same. Although ultrametrics do not have the versatility to model nonlinear evolutionary phenomena, they are of great relevance in data analysis, in particular in the context of ``linkage'' clustering algorithms (single, complete or average linkage)~\cite{ailon2005fitting}. Existing methods for ultrametric approximations scale well and provide approximation guarantees for their general graph problem counterparts. For both general tree-metrics and ultrametrics, two objectives have received significant attention: The Dasgupta objective and derivatives thereof~\cite{dasgupta2016cost,charikar2019hierarchical} and the $\ell_p$ objective~\cite{ailon2005fitting,harb2005approximating}. Dasgupta's objective was notably used for hierarchical clustering (HC) in hyperbolic spaces as a means to mitigate the hard combinatorial optimization questions associated with this clustering metric~\cite{sahoo2020tree}; see also~\cite{macaluso2021cluster} for other approaches to HC in hyperbolic spaces. Neither of the aforementioned approaches provides a solution to the problem of fitting tree-metrics for data with moderate-to-high Gromov hyperbolicity, i.e., noisy or perturbed tree-metric data not covered by existing performance guarantee analyses. In addition, no previous study has compared the effectiveness of different objectives for the measure of fit for tree-metrics on real-world datasets.

Our contributions are three-fold.
\begin{itemize}
    \item We propose the first approach to data denoising (i.e., Gromov hyperbolicty reduction) in hyperbolic space as the first step to improving the quality of tree-metric fits.
    \item We combine our preprocessing method designed to fit both general trees and ultrametrics with solutions for removing negative edge-weights in the tree-metrics that arise due to practical deviations from tree-like structures.
    \item We demonstrate the influence of hyperbolic denoising on the quality of tree-metric fitting and hierachical clustering for different objectives. In particular, we perform an ablation study that illustrates the advantages and disadvantages of different objectives (i.e., Dasgupta~\cite{dasgupta2016cost} and $\ell_2$~\cite{harb2005approximating}) when combined with denoising in hyperbolic spaces.
\end{itemize}

The paper is organized as follows. Related works are discussed in Section~\ref{sec:related_works} while the relevant terminology and mathematical background material are presented in Section~\ref{sec:prelim}. Our tree-fitting, HC clustering approach and accompanying analysis on different choices of objective functions are described in Section~\ref{sec:obj_choice}. More details regarding the HyperAid method are provided in Section~\ref{sec:HyperAid}, with experimental results made available in Section~\ref{sec:exp}. All proofs are relegated to the Supplement.

\section{Related works}\label{sec:related_works}

\textbf{The Dasgupta HC objective. } Determining the quality of trees produced by HC algorithms is greatly aided by an adequate choice of an objective function. Dasgupta~\cite{dasgupta2016cost} proposed an objective and proved that an $O(\alpha_n\log n)$-approximation can be obtained using specialized $\alpha_n$-approximation sparsest cut procedures~\cite{dasgupta2016cost}. HC through the lens of Dasgupta's objective and variants thereof have been investigated in~\cite{charikar2017approximate,moseley2017approximation,ahmadian2019bisect,alon2020hierarchical}. For example, the authors of~\cite{moseley2017approximation} showed that the average linkage (UPGMA) algorithm produces a $1/3$-approximation for a Dasgupta-type objective. It is important to point out that the Dasgupta objective depends only on the sizes of unweighted subtrees rooted at lowest common ancestors (LCAs) of two leaves, the optimal trees are restricted to be binary, and that it does not produce edge-lengths (weights) in the hierarchy. The latter two properties are undesirable for applications such as natural language processing, phylogeny and evolutionary biology in general~\cite{waterman1977additive}. As a result, other objectives have been considered, most notably metric-tree fitting objectives, which we show can resolve certain issues associated with the use of the Dasgupta objective.

\textbf{Tree-metric learning and HC. }In comparison to the Dasgupta objective, metric-based objective functions for HC have been mostly overlooked. One metric based objective is the $\ell_p$ norm of the difference between the input metric and the resulting (fitted) tree-metric~\cite{waterman1977additive}. For example, the authors of~\cite{farach1995robust} showed that an optimal ultrametric can be found with respect to the $\ell_\infty$ norm loss for arbitrary input metric in polynomial time. It is also known in the literature that finding optimal tree-metrics with respect to the $\ell_p$ norm, for $p\in[1,\infty],$ is NP-hard and finding optimal ultrametrics for $p\in[1,\infty)$ is also NP-hard. In fact, the $\ell_\infty$ problem for tree-metrics and the $\ell_1$ problem for tree-metrics and ultrametrics are both APX-hard~\cite{wareham1993complexity,agarwala1998approximability,ailon2005fitting}. The work~\cite{agarwala1998approximability} presents a $3$-approximation result for the closest tree-metric under $\ell_\infty$ norm and established an important connection between the problems of fitting an ultrametric and fitting a tree-metric. The result shows that an $\alpha$-approximation result for the constrained ultrametric fitting problem yields a $3\alpha$-approximation solution for tree-metric fitting problem. The majority of the known results for tree-metric fitting depend on this connection. The algorithm in~\cite{ailon2005fitting} offers an $O((\log n\log\log n)^{1/p})$ approximation for both tree-metrics fitting and ultrametrics fitting under general $\ell_p$ norms. The work~\cite{cohen2021fitting} improves the approximation factor to $O(1)$ for the $\ell_1$ norm. Unfortunately, despite having provable performance guarantees, both the methods from~\cite{ailon2005fitting} and~\cite{cohen2021fitting} are not scalable due to the use of correlation clustering. As a result, they cannot be executed efficiently in practice even for trees that have $\sim$50 leaves, due to severe computational bottlenecks. 
% The aforementioned approaches have no built-in robustness to noise and moderate-to-large deviations from perfect tree-metric structures.

\textbf{Gradient-based HC.} The authors of~\cite{chierchia2020ultrametric} proposed an ultrametric fitting framework (Ufit) for HC based on gradient descent methods that minimize the $\ell_2$ loss between the resulting ultrametric and input metric. There, a subdifferentiable min-max operation is integrated into the cost optimization framework to guarantee ultrametricity. Since an ultrametric is a special case of a tree-metric, Ufit can be used in place of general tree-metric fitting methods. The gHHC method~\cite{monath2019gradient} represents the first HC approach based on hyperbolic embeddings. The authors assume that the hyperbolic embeddings of leaves are given and describe how to learn the internal vertex embeddings by optimizing a specialized objective function. The HypHC approach~\cite{chami2020trees} disposes of the requirement to have the leaf embeddings. It directly learns a hyperbolic embedding by optimizing a relaxation of Dasgupta's objective~\cite{dasgupta2016cost} and reconstructs the discrete binary tree from the learned hyperbolic embeddings of leaves using the distance of their hyperbolic lowest common ancestors to the root. Compared to gHHC, our framework also does not require input hyperbolic embeddings of leaves. We learn the hyperbolic embeddings of leaves by directly minimizing the $\ell_2$ loss (rather than the Dasgupta objective) between the hyperbolic distances and the input metric of pairs of points, which substantially differs from HypHC and has the goal to preprocess (denoise) the data for downstream tree-metric fitting. Furthermore, it will be shown in the subsequent exposition that metric based losses are preferable to the Dasgupta objective for the new task of hyperbolic denoising and consequent HC. 

\textbf{Learning in hyperbolic spaces. }Representation learning in hyperbolic spaces has received
significant interest due to its effectiveness in capturing latent hierarchical structures~\cite{krioukov2010hyperbolic,nickel2017poincare,papadopoulos2015network,tifrea2018poincare}. Several learning methods for Euclidean spaces have been generalized to hyperbolic spaces, including perceptrons and SVMs~\cite{cho2019large,chien2021highly,tabaghi2021linear}, neural networks~\cite{ganea2018hyperbolic,shimizu2021hyperbolic} and graph neural networks~\cite{chami2019hyperbolic,liu2019hyperbolic}. In the context of learning hyperbolic embeddings, the author of~\cite{sarkar2011low} proposed a combinatorial approach for embedding trees with low distortion in just two dimensions. The work~\cite{pmlr-v80-sala18a} extended this idea to higher dimensions. Learning methods for hyperbolic embeddings via gradient-based techniques were discussed in~\cite{nickel2017poincare,nickel2018learning}. None of these methods currently provides quality clustering results for practical data whose distances do not ``closely'' match those induced by a tree-metric. Furthermore, the main focus of~\cite{nickel2017poincare,nickel2018learning} is to learn hyperbolic embeddings that represent the input graphs and trees as accurately as possible. In contrast, our goal is to learn hyperbolic metrics that effectively ``denoise'' parts of the input metric to better conform tree structures, in order to improve the performance of arbitrary metric-tree fitting and HC downstream methods.

\section{Preliminaries}\label{sec:prelim}
\begin{figure*}
  \includegraphics[trim={2.5cm 9cm 10.5cm 3cm},clip,width=0.3\linewidth]{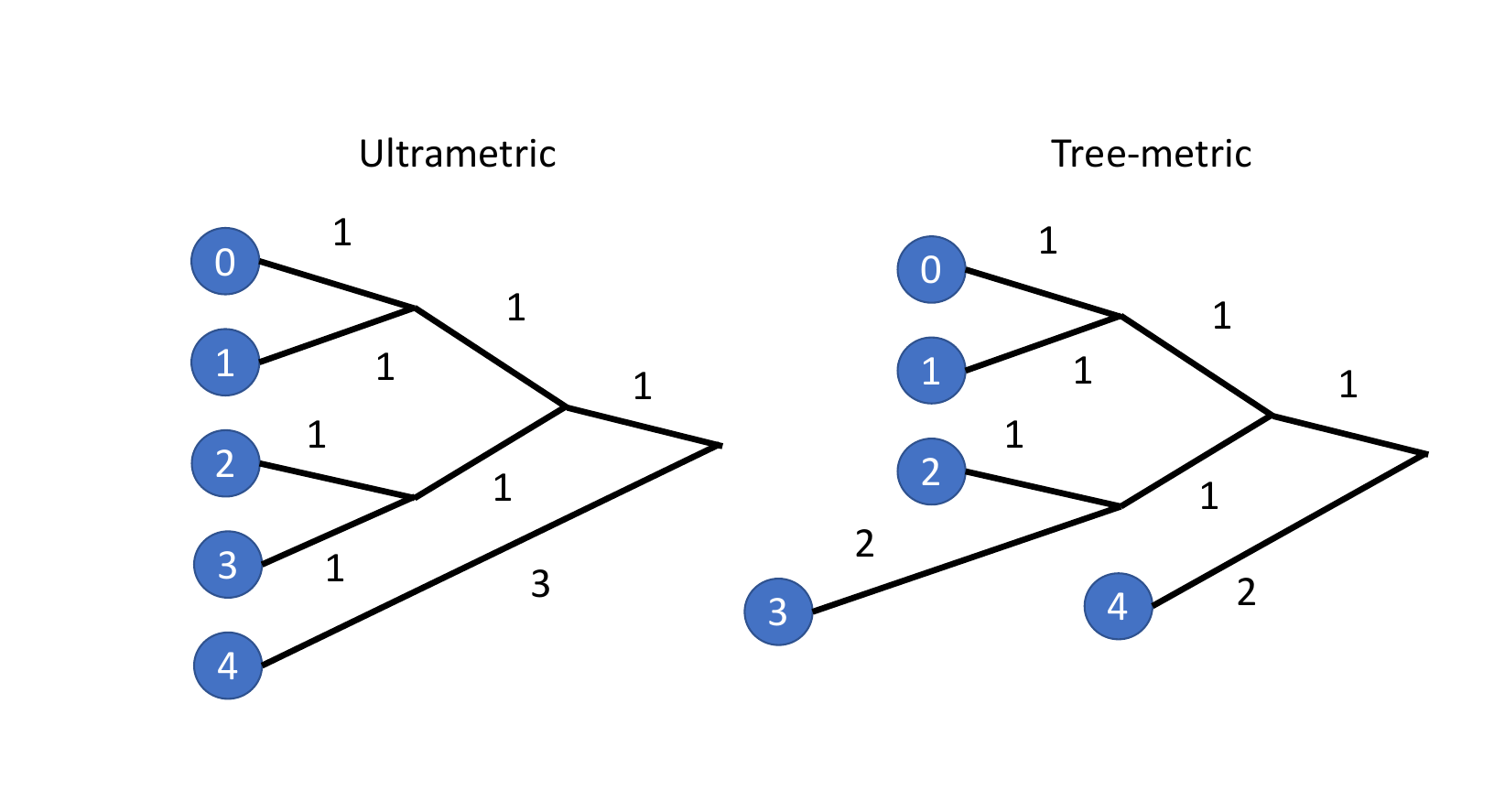}
  \includegraphics[trim={1.5cm 9cm 2.5cm 3cm},clip,width=0.68\linewidth]{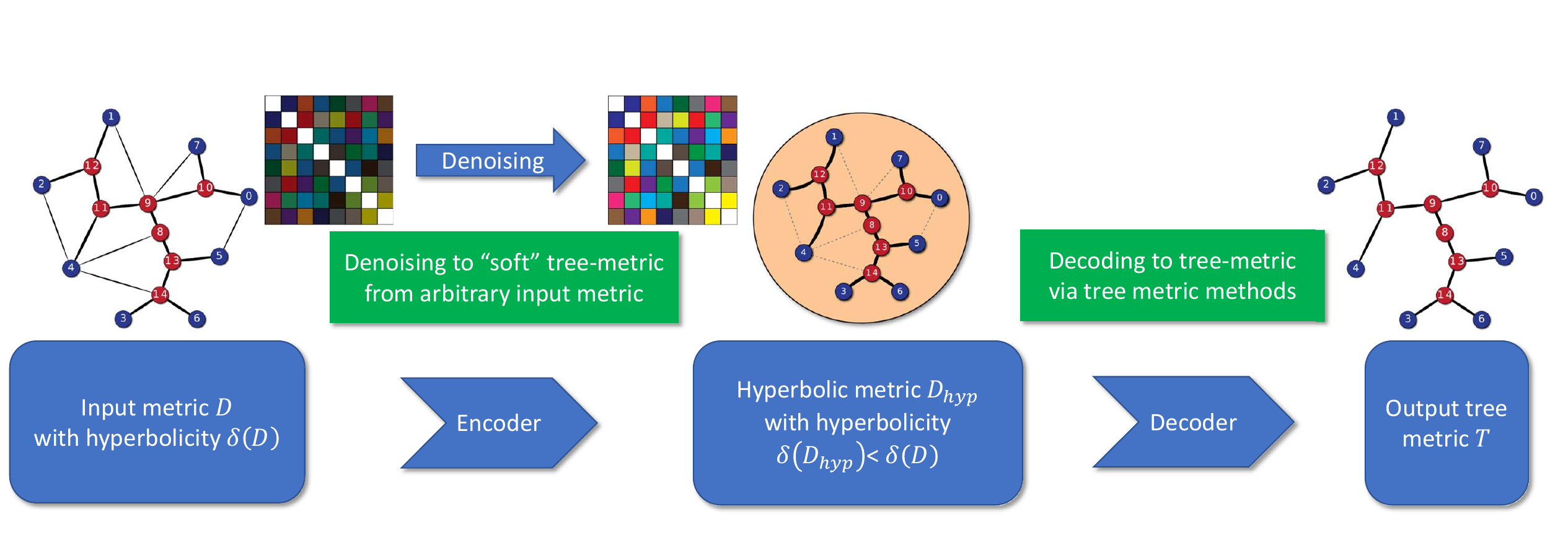}
  \caption{Left: Illustration of the generative tree for an ultrametric and a tree-metric. Note that all the weighted distances of leaves to the root are equal for an ultrametric, while general tree-metrics do not have to satisfy this constraint. Right: Detailed block diagram of our HyperAid framework. Detailed explanations of the encoder and decoder modules are available in Section~\ref{sec:HyperAid}.}
  \label{fig:tree_ultrametric}
\end{figure*}

\textbf{Notation and terminology. }Let $T = (V,E,w)$ be a tree with vertex set $V$, edge set $E$ and nonnegative edge weights $w:E \rightarrow \mathbb{R}_{+}$. Let $\mathcal{L}(T)\subseteq V$ denote the set of leaf vertices in $T$. The shortest length path between any two vertices $v_1, v_2 \in V$ is given by the metric $d_{T}: V \times V \rightarrow \mathbb{R}_{+}$. The metric $d_T$ is called a tree-metric induced by the (weighted) tree $T$. In binary trees, there is only one internal (nonleaf) vertex of degree two, the root vertex $v_r$; all other internal vertices are of degree three. Since our edge weights are allowed to be zero, nonbinary trees can be obtained by adding edges of zero weight. The pairwise distances of leaves in this case remain unchanged. A vertex $v_2$ is said to be a descendant of vertex $v_1$ if $v_1$ belongs to the directed path from the root $v_r$ to $v_2$. Also, with this definition, the vertex $v_1$ is an ancestor of the vertex $v_2$. For any two vertices $v_1, v_2 \in \mathcal{L}(T)$, we let $v_1 \land v_2 $ denote their lowest common ancestor (LCA), i.e., the common ancestor that is the farthest from the root vertex. We define a \textit{clan} in $T$ as a subset of \textit{leaf} vertices that can be separated from the rest of the tree by removing a single edge. Furthermore, we let $\mathcal{T}_{n}$ be the set of binary trees with $n$ leaves. For any internal vertex $v$ of $T \in \mathcal{T}_n$, we let $T(v)$ be a binary subtree of $T$ rooted at $v$. 

\textbf{$\delta$-hyperbolic metrics. }Gromov introduced the notion of $\delta$-hyperbolic metrics as a generalization of the type of metric obtained from manifolds with constant negative curvature~\cite{gromov1987hyperbolic,sonthalia2020tree}, as described below.

\begin{definition}
Given a metric space $(V,d)$, the Gromov product of $x,y\in V$ with respect to a base point $r\in V$ is
\begin{align}
    (x,y)_r = \frac{1}{2}\left(d(r,x)+d(r,y)-d(x,y)\right).
\end{align}
\end{definition}
Note that the Gromov product measures how close $r$ is to the geodesic connecting $x$ and $y$~\cite{sonthalia2020tree}.
\begin{definition}\label{def:gromov_hyperbolicity}
A metric $d$ on a space $V$ is a $\delta$-hyperbolic metric for $\delta\geq 0$ if
\begin{align}\label{eq:four_points}
    \forall r,x,y,z\in V\quad (x,y)_r \geq \min\left((x,z)_r,(y,z)_r \right) - \delta.
\end{align}
\end{definition}
Usually, when stating that $d$ is $\delta$-hyperbolic we mean that $\delta$ is the smallest possible value that satisfies the condition~\eqref{eq:four_points}. An ultrametric is a special case of a tree-metric~\cite{ailon2005fitting}, and is formally defined as follows.
\begin{definition}
 A metric $d_U$ on a space $V$ is called an ultrametric if it satisfies the strong triangle inequality property
 \begin{align}
     \forall x,y,z \in V,\quad d_U(x,y) \leq \max \left\{ d_U(x,z),d_U(y,z)\right\}.
 \end{align}
\end{definition}
Note that the generating tree of an ultrametric $d_U$ on $V$ has a special structure: All element in $V$ are leaves of the tree and all leaves are at the same distance from the root~\cite{ailon2005fitting}. An illustration of an ultrametric can be found at Figure~\ref{fig:tree_ultrametric}.

\textbf{Hyperbolic geometry. }A hyperbolic space is a nonEuclidean space with a constant negative curvature. Despite the existence of various equivalent models for hyperbolic spaces, Poincar\'e ball models have received the broadest attention in the machine learning and data mining communities. Although our hyperbolic denoising framework can be extended to work for any hyperbolic model, we choose to work with Poincar\'e ball models with negative curvature $-c,c>0$: $\mathbb{B}^d_c=\{x\in \mathbb{R}^d:\sqrt{c}\,\|x\|< 1\}$. For any two points $x,y\in \mathbb{B}^d_c$, their hyperbolic distance equals
\begin{align}
    d_{\mathbb{B}^d_c}(x,y) = \frac{1}{\sqrt{c}}\cosh^{-1}\left(1+\frac{2c\|x-y\|^2}{(1-c\|x\|^2)(1-c\|y\|^2)}\right).
\end{align}
Furthermore, for a reference point $r\in \mathbb{B}^d_c$, we denote its tangent space, the first order linear approximation of $\mathbb{B}^d_c$ around $r$, by $T_b\mathbb{B}^d_c$. M\"obius addition and scalar multiplication --- two basic operators on the Poincar\'e ball~\cite{ungar2008analytic} -- may be defined as follows. The M\"obius sum of $x,y\in\mathbb{B}^d_c$ equals
\begin{align}\label{eq:Mobius_add}
    & x \oplus_c y = \frac{(1+2c\ip{x}{y}+c\|y\|^2)x + (1-c\|x\|^2)y}{1+2c\ip{x}{y}+c^2\|x\|^2\|y\|^2}.
\end{align}
Unlike its vector-space counterpart, this addition is noncommutative and nonassociative. The M\"obius version of multiplication of $x\in \mathbb{B}^d_c\setminus\{0\}$ by a scalar $t\in \mathbb{R}$ is defined according to
\begin{align}\label{eq:Mobius_mul}
    & r \otimes_c x = \tanh(t\tanh^{-1}(\sqrt{c}\|x\|))\frac{x}{\sqrt{c}\|x\|} \text{ and }r\otimes_c 0 = 0.
\end{align}
% \textcolor{green}{This could be removed. It is not of high importance.}
For more details, see~\cite{vermeer2005geometric,ganea2018hyperbolic,chien2021highly,tabaghi2021linear}. 
% Using M\"obius operations, the distance function in the Poincar\'e model can be expressed as 
% \begin{align}
%      d_{\mathbb{B}^d_c}(x,y) = \frac{2}{\sqrt{c}}\tanh^{-1}(\sqrt{c}\|(-x)\oplus_c y\|).
% \end{align}
With the same operators, one can also define geodesics -- analogues of straight lines in Euclidean spaces and shortest path in graphs -- in $\mathbb{B}^d_c$. The geodesics connecting two points $x,y\in \mathbb{B}^d_c$ is given by
\begin{align}\label{eq:geodesic}
    \gamma_{x\rightarrow y}(t) = x \oplus_c(t\otimes_c((-x)\oplus_c y)).
\end{align}
Note that $t \in [0,1]$ and $\gamma_{x\rightarrow y}(0) = x$ and $\gamma_{x\rightarrow y}(1) = y$. An illustration of the Poincare model is given in Figure~\ref{fig:poincare_disk}.

\begin{figure}
  \includegraphics[width=0.6\linewidth]{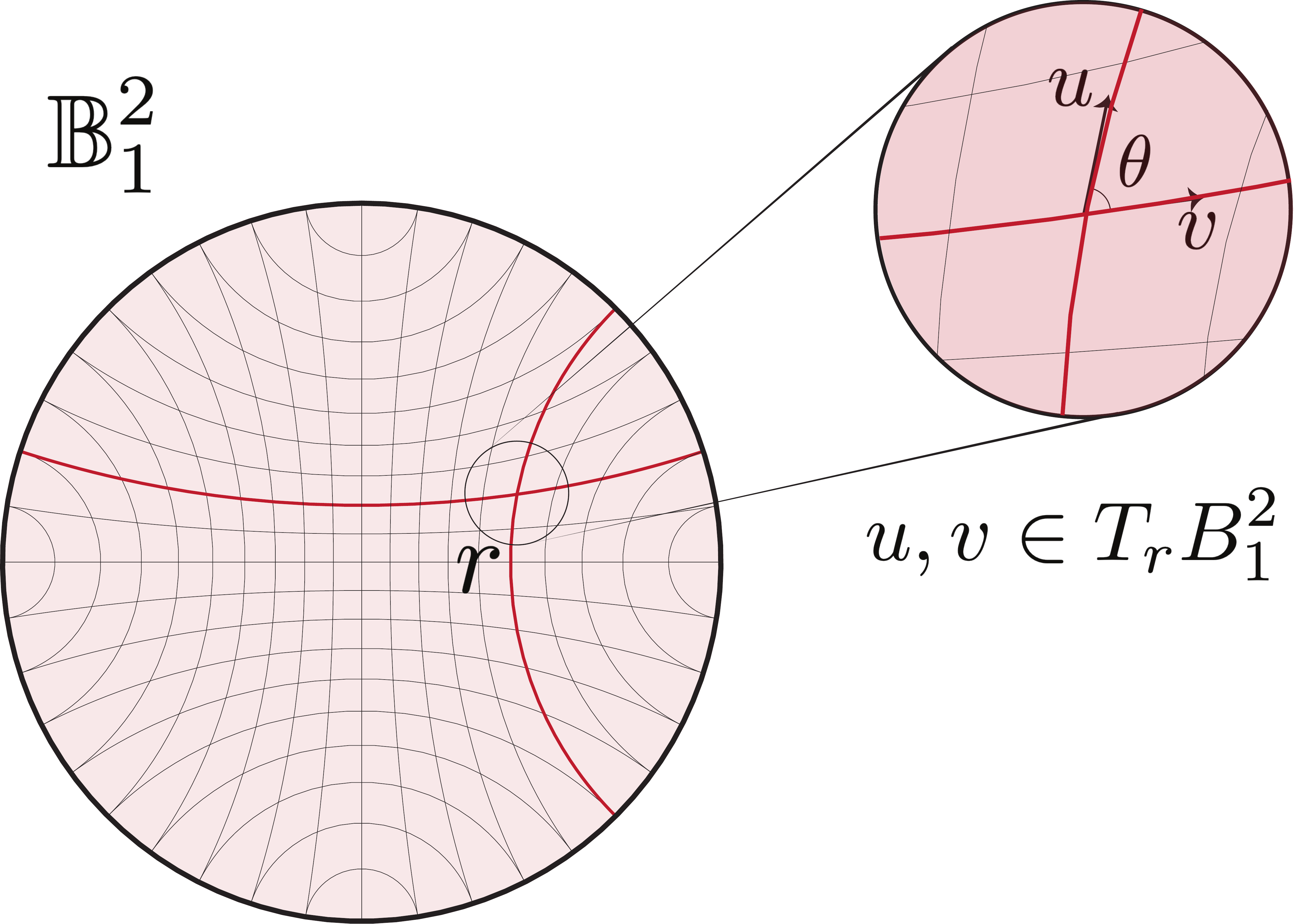}
  \caption{The two-dimensional Poincar\'e disk. (Right) a set of geodesics. (Left) vectors in the tangent space at $r \in \mathbb{B}_{1}^2$.}
  \label{fig:poincare_disk}
  \vspace{-0.25cm}
\end{figure}

\section{Hierarchical Clustering: A Tree Learning Problem}\label{sec:obj_choice}
As previously explained, tree-metric fitting is closely related to hierarchical clustering, and both methods are expected to greatly benefit from data denoising. It is nevertheless unclear for which tree-metric fitting algorithms and objectives used in hierarchical clustering does one see the most significant benefits (if any) from denoising. To this end, we describe two objectives that are commonly used for the aforementioned tasks in order to determine their strengths and weaknesses and potential performance benefits.

Let $\{ d_{i,j} \}_{i,j \in [n]}$ be a set of nonnegative dissimilarity scores between a set of $n$ entities. In HC problems, the goal is to find a tree $T$ that best represents a set of gradually finer hierarchical partitions of the entities. In this representation, the entities are placed at the leaves of the tree $T$, i.e., $|\mathcal{L}(T)| = n$. Each internal vertex $v$ of $T$ is representative of its clan, i.e., the leaves of $T(v)$ or the hierarchical cluster at the level of $v$. We find the following particular HC problem definition useful in our analysis.
\begin{problem}\label{prob:hierarchical_clustering}
Let $\{d_{i,j}\}_{i,j \in [n]}$ be a set of pairwise dissimilarities between $n$ entities. The optimal hierarchical clusters are equivalent to a binary tree $T^*$ that is a solution to the following problem:
    \begin{equation*}
        T^* = \argmin_{T \in \mathcal{T}_n} \sum_{i,j \in [n]} L(f(v_i, v_j) , d_{i,j} ),
    \end{equation*}
	where $f: \mathcal{L}(T) \times \mathcal{L}(T) \rightarrow \mathbb{R}_{+}$ is a (known) function of pairwise dissimilarities of the leaves in $T$, and $L(\cdot, \cdot)$ is a given loss function.
\end{problem}

\subsection{The Dasgupta objective}
Dasgupta~\cite{dasgupta2016cost} introduced the following cost function for HC:
	\begin{equation}\label{eq:dagupta_cost}
		\mathrm{cost}(T) = \sum_{i,j \in [n]} | \mathcal{L}( T(v_i \land v_j))|d_{i,j}.
	\end{equation}
The optimal HC solution is a binary tree that maximizes~\eqref{eq:dagupta_cost}. In the formalism of Problem \ref{prob:hierarchical_clustering}, this approach assumes that the pairwise dissimilarity between leaf vertices $v_i$ and $v_j$ equals the number of leaves of the tree rooted at their LCA, i.e.,
	\[
		f(v_i, v_j) = | \mathcal{L}( T(v_i \land v_j))|.
	\]
	In words, two dissimilar leaves $v_i$ and $v_j$ should have an LCA $v_i \land v_j$ that is close to the root. This ensures that the size of $| \mathcal{L}( T(v_i \land v_j))|$ is proportional to $d_{i,j}$. It is also important to note that this method is sensitive to the absolute value of the input dissimilarities, since the loss function is the inner product between measurements and tree leaf dissimilarities, i.e., $L(| \mathcal{L}( T(v_i \land v_j))| , d_{i,j} ) = -| \mathcal{L}( T(v_i \land v_j))| d_{i,j} $.

\subsection{The $\ell_p$ loss objective}

\begin{figure}[t]
  \includegraphics[width=\linewidth]{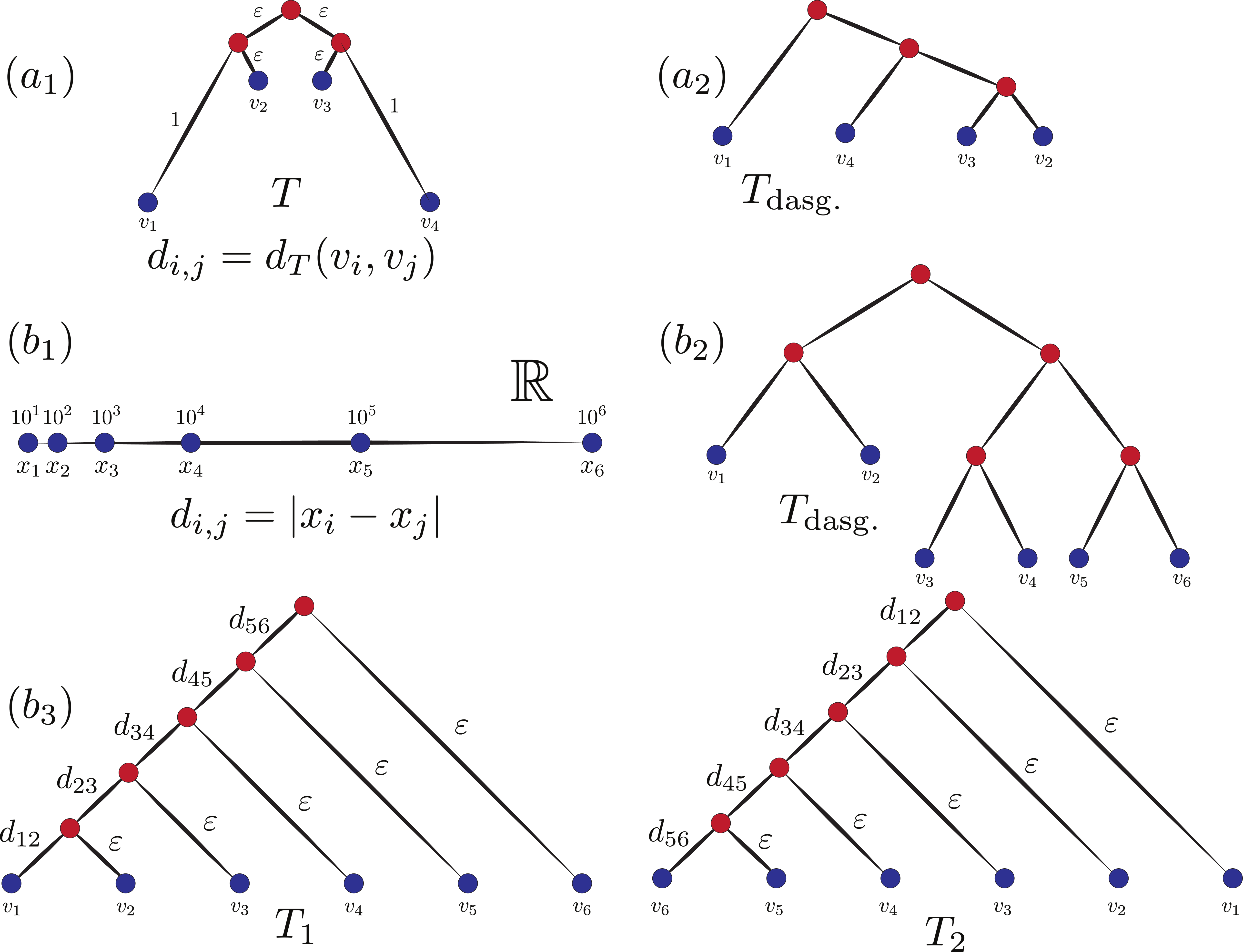}
  \caption{Hierarchical clusters learned by different objectives. Figure-$(a_1)$ Dissimilarity measurements are pairwise leaf distances on the weighted tree $T$ (optimal tree in $\ell_2$ sense). Figure-$(a_2)$: The tree $T_{\mathrm{dasg.}}$ optimizes Dasgupta's objective for measurements $\{ d_{T}(v_i,v_j)\}_{i,j \in [4]}$. Figure-$(b_1)$: Dissimilarity measurements are pairwise distances of points on a line in $\mathbb{R}$, i.e., $\mathcal{D} = \{ d_{i,j}: d_{i,j} = |x_i - x_j|, i,j \in [6] \},$ where $x_{i} = 10^{i}$ for all $i \in [6]$. Figure-$(b_2)$:   The Dasgupta-optimal tree for $\mathcal{D}$. Figure-$(b_3)$: Both trees $T_1$ and $T_2$ optimize the $\ell_p$ cost function ($\varepsilon \rightarrow 0^{+}$).}
  \label{fig:das_vs_l2}
  \vspace{-0.5cm}
\end{figure}

Instead of using the Dasgupta objective, one can choose to work with a leaf dissimilarity metric that is provided by the tree distance function, $d_{T}$. This way, one can learn a \emph{weighted tree} that best represents the input dissimilarity measurements. This is formally described as follows.
	\begin{problem}\label{prob:l_p}
	Let $\{d_{i,j}\}_{i,j \in [n]}$ be a set of pairwise dissimilarities of $n$ entities. The optimal hierarchical clusters in the $\ell_p$ norm sense are equivalent to a tree $T^*$ that minimizes the objective
		\begin{equation}\label{eq:hierarchical_cost}
		\mathrm{cost}_{p}(T) = \Big( \sum_{v_i,v_j \in \mathcal{L}(T) } |d_{T}(v_i, v_j)-d_{i,j}|^p \Big)^{\frac{1}{p}}, \text{ where $p \geq 1$.}
		\end{equation}
		
	\end{problem}
	According to our general formalism, Problem~\ref{prob:l_p} uses the natural tree-distance function to compute the leaf dissimilarities, i.e., $f(v_i, v_j) = d_{T}(v_i,v_j)$, and the $\ell_p$ loss to measure the quality of the learned trees, i.e., $L(d_{T}(v_i,v_j),d_{i,j}) = |d_{T}(v_i,v_j)-d_{i,j}|^{p}$. The objective function in Problem~\ref{prob:l_p} aims to jointly find a topology (adjacency matrix) and edge weights for a tree that best approximates a set of dissimilarity measurements. This problem can be interpreted as approximating the measurement with an additive distance matrix (i.e., distances induced by trees). This is a significantly stronger constraint than the one induced by the requirement that two entities that are similar to one another should be placed (as leaves) as close as possible in the tree. We illustrate this distinction in Figure~\ref{fig:das_vs_l2} $(a_1), (a_2)$: The dissimilarity measurements are generated according to the tree $T$ and the vertices $v_2$ and $v_3$ can be arbitrarily close to each other ($\varepsilon \rightarrow 0$)--- see Figure~\ref{fig:das_vs_l2}-$(a_1)$. The optimal tree in the $\ell_p$ sense does not place these vertices close to each other since the objective is to approximate all dissimilarity measurements with a set of additive distances. This is therefore in contrast to Dasgupta's objective where, for a small enough $\varepsilon$, it favors placing the vertices $v_2$ and $v_3$ closest to each other (see Figure~\ref{fig:das_vs_l2}-$(a_2)$). In Figures \ref{fig:das_vs_l2} $(b_1 - b_3)$, we provide another example to illustrate the distinction between optimal clusters according to the $\ell_2$ and Dasgupta's criteria.

\textbf{Remark 1. }We can interpret Problem~\ref{prob:l_p} as two independent problems: (1) Learning edge-weights, and (2) Learning the topology. A binary tree $T$ with $n$ leaves has a total of $2n-1$ vertices and $2n-2$ edges. Let $w = (w_1, \ldots, w_{2n-2})^{T} \in \mathbb{R}^{2n-2}_{+}$ be the vector of the edge weights. Any distance between pairs of vertices in $v_1, \ldots, v_n \in \mathcal{L}(T)$ can be expressed as a linear combination of the edge weights. We can hence write $d_T \in \mathbb{R}^{\frac{n \times (n-1)}{2}}$ for the vector of all pairwise distances of leaves, so that $d_T = A_T w$. Given a fixed topology (adjacency matrix of the tree $T$), we can compute the edges weight of $T$ that minimize the objective function in Problem~\ref{prob:l_p}
\begin{equation}
	\min_{w: E \rightarrow \mathbb{R}_{+} } \mathrm{cost}_{p}(T).
\end{equation}
There always exists at least one solution to the optimization problem \ref{prob:l_p}. This is the result of the following proposition.
\begin{proposition}\label{prop:convex}
The objective in Problem~\ref{prob:l_p} is a convex function of the edge weights of the tree $T$.	
\end{proposition}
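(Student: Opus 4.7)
The plan is to observe that, once the topology of $T$ is fixed, every pairwise leaf distance $d_T(v_i,v_j)$ is just the sum of the weights of the edges along the (unique) path between $v_i$ and $v_j$, so $d_T$ depends linearly on the weight vector $w\in\mathbb{R}_+^{2n-2}$. Concretely, as the proposition's surrounding discussion already notes, we have $d_T = A_T w$, where $A_T\in\{0,1\}^{\binom{n}{2}\times(2n-2)}$ is the (topology-dependent) path-incidence matrix. Writing $d\in\mathbb{R}^{\binom{n}{2}}$ for the vector stacking the inputs $\{d_{i,j}\}$, the objective becomes
\begin{equation*}
\mathrm{cost}_p(T) \;=\; \|A_T w - d\|_p .
\end{equation*}

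From here the proof is a one-liner: the $\ell_p$ norm is convex for every $p\geq 1$ (this is precisely Minkowski's inequality), and the map $w\mapsto A_T w - d$ is affine, so the composition $w\mapsto\|A_Tw-d\|_p$ is convex. If one prefers a from-scratch argument, I would fix $w_1,w_2\in\mathbb{R}_+^{2n-2}$ and $\lambda\in[0,1]$, set $u_k := A_T w_k - d$, and then apply Minkowski directly:
\begin{equation*}
\|\lambda u_1+(1-\lambda)u_2\|_p \;\leq\; \lambda\|u_1\|_p + (1-\lambda)\|u_2\|_p,
\end{equation*}
which is exactly the convexity inequality for $\mathrm{cost}_p(T)$ as a function of $w$.

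There is essentially no obstacle; the only subtlety worth flagging is that raising $\|\cdot\|_p^p$ to the power $1/p$ at the end does not break convexity here, because convexity is inherited from $\|\cdot\|_p$ being a norm rather than from the componentwise pipeline $|t|\mapsto|t|^p\mapsto(\cdot)^{1/p}$ (the last step being concave would confuse that naive route). It is also worth noting explicitly, for completeness, that the domain $\mathbb{R}_+^{2n-2}$ is convex, so the statement "convex function of the edge weights" makes sense as written and the existence of a minimizer on the closed set $\{w\geq 0\}\cap\{\mathrm{cost}_p(T)\leq \mathrm{cost}_p(T)|_{w=0}\}$ follows from continuity and coercivity of the $\ell_p$ norm.
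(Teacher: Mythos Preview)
Your proof is correct and is essentially identical to the paper's own argument: both rewrite the objective as $\|A_T w - d\|_p$ and then invoke the convexity of the $\ell_p$ norm (via the triangle/Minkowski inequality) together with the affinity of $w\mapsto A_T w - d$. Your additional remarks about the domain and the $\|\cdot\|_p$ versus $\|\cdot\|_p^p$ subtlety are fine but go slightly beyond what the paper records.
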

% \begin{proof}
% 	The objective in Problem~\ref{prob:l_p} can be written as\
% 	\[
% 		\min_{A_T \in \mathcal{A} , w \in \mathbb{R}^{2n-2}_{+} } \| A_T w - d\|_p
% 	\]
% 	where $\mathcal{A}$ is the set of admissible linear operators that map edge weights to additive pairwise distances on a binary tree with $N$ leaves. Let $f(w) = \| A_T w - d\|_p$. For a $\lambda \in [0, 1]$, we have
% 	\begin{align*}
% 		f(\lambda w + (1-\lambda )v ) &= \| \lambda A_T w + (1- \lambda) A_T v - d\|_p \\
% 		 &= \| \lambda (A_T w-d) + (1- \lambda) (A_T v - d)\|_p \\
% 		 &\leq \lambda \| A_T w-d\|_{p} + (1- \lambda)\|A_T v - d\|_p \\
% 		 &\leq  \lambda f(w) + (1- \lambda)f(v).
% 	\end{align*}
% 	This result follows from the convexity of the $\ell_p$ norm for $p \geq 1$.
% \end{proof}

\textbf{Remark 2. }Let $e_1$ and $e_2$ be two edges, with weights $w_1$ and $w_2$, incident to the root of a binary tree $T \in \mathcal{T}_{n}$. We can delete the root along with the edges $e_1$ and $e_2$, and place an edge of weight $w_1+w_2$ between its children. This process trims the binary tree $T$ while preserving all pairwise distances between its leaf vertices. Hence, we can also work with trimmed nonbinary trees. Furthermore, in this formalism, the root vertex does not carry any hierarchy-related information about the entities (leaves), although it is important for ``anchoring'' the tree in HC. We discuss how to select a root vertex upon tree reconstruction in our subsequent exposition.

\subsection{Hierarchical objectives: $\ell_p$ vs. Dasgupta}

\begin{figure}[t]
  \includegraphics[width=\linewidth]{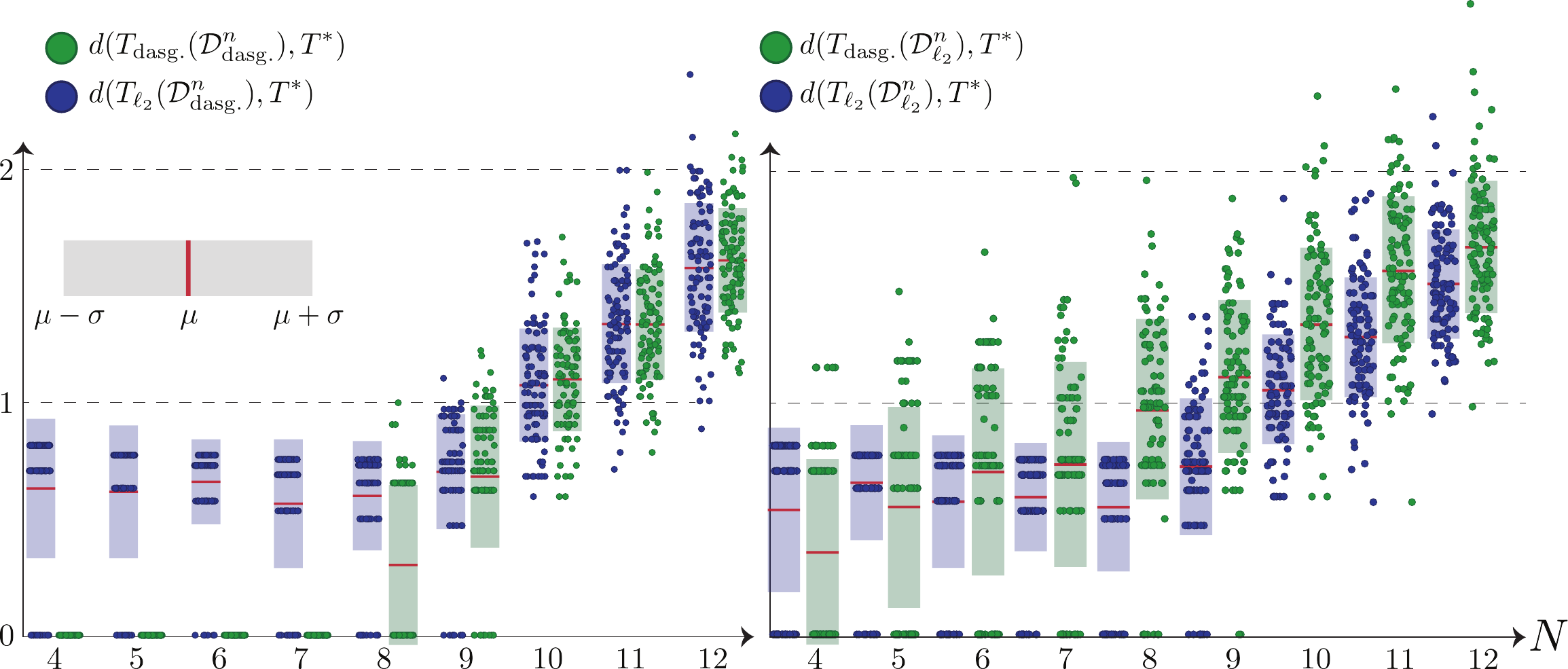}
  \caption{The reconstruction error for the trees $T_{\mathrm{dasg.}}(\mathcal{D}^{n}_{\mathrm{dasg.}})$, $T_{\mathrm{dasg.}}(\mathcal{D}^{n}_{\ell_2})$, $T_{\ell_2}(\mathcal{D}^{n}_{\ell_2})$, and $T_{\ell_2}(\mathcal{D}^{n}_{\mathrm{dasg.}})$. We have $d(T_1, T_{2}) := \frac{2}{n(n-1)}\|d_{T_1}-d_{T_{2}}\|_{2},$ where $d_T$ is the pairwise leaf distances in the tree $T \in \mathcal{T}_n$, assuming unit length edges.}
  \label{fig:exp_das_vs_l2}
\end{figure}

Here we discuss \emph{which objective function produces more ``acceptable'' trees and hierarchical clusters, $\ell_p$ or Dasgupta?} In clustering problems for which we do not have the ground truth clusters, it is hard to quantify the quality of the solution. This is especially the case when using different objectives, as in this case comparisons may be meaningless. To address this issue, we performed an experiment in which ground-truth synthetic data is modeled in a way that reflect the properties of the objectives used in optimization. Then, the quality of the objective is measured with respect to its performance on both model-matched and model-unmatched data.

Let us start by generating a ground truth random (weighted) tree $T \in \mathcal{T}_n$ with edge weights distributed uniformly and independently in $[0, 1]$. Measurements are subsequently generated by computing the  dissimilarities of leaves in $T$. According to the Dasgupta objective, the dissimilarity between $v_i, v_j \in \mathcal{L}(T)$ is given by $|\mathcal{L}(T(v_i \land v_j)|$, whereas according to the $\ell_2$ objective, it is given by $d_{T}(v_i, v_j)$ (all distances are computed wrt the weighted tree $T$). We postulate that \emph{if the measurements are generated according to the Dasgupta notion of dissimilarities, then his objective should perform better in terms of recovering the original tree than the $\ell_2$ objective; and vice versa}. 

In this experiment, we generate $100$ random ground truth trees with i.i.d. edge weights from $\mathrm{Unif.}[0,1]$; and produce measurements according to both objectives, denoted by $\mathcal{D}_{\mathrm{dasg.}}$ and $\mathcal{D}_{\ell_2}$. Then, we generate a {\bf fixed} set of $10^5$ random trees and find the tree that minimizes the cost function of each of the problems. Let $\mathcal{D}^{n}_{\mathrm{dasg.}}$ be the measurements of the form $\{|\mathcal{L}(T(v_i \land v_j)|\}_{i,j \in [n]}$ and $\mathcal{D}^{n}_{\ell_2}$ be the measurements of the form $\{d_{T}(v_i, v_j)\}_{i,j \in [n]}$ for the randomly generated tree $T \in \mathcal{T}_n$. We let $T_{\ell_2}(\mathcal{D}^{n}_{\mathrm{dasg.}})$ be the binary tree with the minimum $\ell_2$ cost on dissimilarity measurements $\mathcal{D}^{n}_{\mathrm{dasg.}}$. We define $T_{\ell_2}(\mathcal{D}^{n}_{\ell_2})$, $T_{\mathrm{dasg.}}(\mathcal{D}^{n}_{\mathrm{\ell_2}})$, and $T_{\mathrm{dasg.}}(\mathcal{D}^{n}_{\mathrm{dasg.}})$ similarly. The goal is to compare the deviation of $T_{\ell_2}(\mathcal{D}^{n}_{\mathrm{dasg.}})$ and $T_{\mathrm{dasg.}}(\mathcal{D}^{n}_{\mathrm{dasg.}})$ from the ground truth tree $T$, i.e., $d(T, T_{\ell_2}(\mathcal{D}^{n}_{\mathrm{dasg.}}))$ vs. $d(T, T_{\mathrm{dasg.}}(\mathcal{D}^{n}_{\mathrm{dasg.}}))$. We ask the same question for $T_{\ell_2}(\mathcal{D}^{n}_{\ell_2})$ and $T_{\mathrm{dasg.}}(\mathcal{D}^{n}_{\ell_2})$. 

In Figure~\ref{fig:exp_das_vs_l2}, we show the scatter plot of the results for each of the $100$ random trials, pertaining to trees with $n$ leaves. If the measurements are generated according to the $\ell_2$ objective, then $T_{\ell_2}$ recovers the underlying tree with higher accuracy compared to $T_{\mathrm{dasg.}}$, as desired. On the other hand, for dissimilarities generated according to Dasgupta's objective, $T_{\ell_2}$ still manages to recover the underlying tree better than $T_{\mathrm{dasg.}}$ for larger trees ($n = 10,11,12)$. This suggests that the $\ell_2$ objective could be more useful in recovering the underlying tree for more versatile types of dissimilarity measurements and larger datasets. 

\section{Hyperbolic tree-denoising}\label{sec:HyperAid}
We describe next how to solve the constrained optimization Problem~\ref{prob:l_p} in a hyperbolic space, and explain why our approach has the effect of data denoising for tree-metric learning. The HyperAid framework is depicted in Figure~\ref{fig:tree_ultrametric}. In particular, the HyperAid encoder is described in Section~\ref{sec:enc_part}, while the decoder is described in Section~\ref{sec:dec_part}. 

\subsection{A $\delta$-hyperbolic metric as a soft tree-metric}\label{sec:enc_part}

The first key idea is to exploit the fact that a tree-metric $d_T$ is $0$-hyperbolic. This can be deduced from the following theorem.
\begin{theorem}[Theorem 1 in~\cite{buneman1974note}]
A graph $G(V,E,w)$ is a tree if and only if it is connected, contains no loops and has shortest path distance $d$ satisfying the four-point condition:
\begin{align}\label{eq:four_points_true}
    & \forall x,y,z,r\in V,\quad d(x,y)+d(z,r) \nonumber\\
    & \leq \max \left\{d(x,z)+d(r,y),d(r,x)+d(y,z) \right\}.
\end{align}
\end{theorem}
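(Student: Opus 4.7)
The plan is to observe first that the ``only if'' direction has two parts: since ``tree'' already means ``connected and acyclic,'' what really needs proof is that the shortest-path metric of any tree satisfies the four-point condition; the ``if'' direction is then immediate because connectivity plus absence of cycles is the definition of a tree. So the whole content lives in showing that for a weighted tree $T=(V,E,w)$ and any four vertices $x,y,z,r\in V$, among the three sums
\begin{equation*}
  S_1=d(x,y)+d(z,r),\quad S_2=d(x,z)+d(y,r),\quad S_3=d(x,r)+d(y,z),
\end{equation*}
the two largest are equal, which immediately implies $S_1\le\max\{S_2,S_3\}$ and, by symmetry, the full four-point inequality.

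The concrete approach is to pass to the Steiner subtree $T'\subseteq T$ spanning $\{x,y,z,r\}$. In a tree this subtree is unique and has at most two internal ``Steiner'' vertices. I would first treat the generic case in which $T'$ has exactly two internal vertices $u,v$, with $x,y$ attached to $u$, $z,r$ attached to $v$, and a central edge of weight $c$ between $u$ and $v$. Writing $a=d(x,u)$, $b=d(y,u)$, $p=d(z,v)$, $q=d(r,v)$, a direct computation along unique paths in $T$ gives
\begin{equation*}
  S_1=a+b+p+q,\quad S_2=S_3=a+b+p+q+2c,
\end{equation*}
so the two largest sums coincide and exceed the smallest by $2c\ge 0$. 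Permuting the labels $\{x,y,z,r\}$ handles the other two possible pairings of four leaves in a binary Steiner topology.

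The main obstacle (such as it is) is the bookkeeping for degenerate Steiner topologies: the central edge may have weight $0$, two of the four points may coincide, some points may already lie on the interior of the geodesic between others, or $T'$ may be a ``star'' rather than a ``caterpillar.'' I would handle these by noting that each degenerate case is obtained from the generic one by collapsing an edge weight to $0$ or identifying two leaves, and the formulas for $S_1,S_2,S_3$ vary continuously in the edge weights, so the equality-of-the-two-largest conclusion persists at the limit. Equivalently, one can unify the cases by picking the pair $\{i,j\}\subset\{x,y,z,r\}$ realizing the smallest of $S_1,S_2,S_3$ and checking that the Steiner topology groups exactly that pair against its complement.

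Finally I would close the proof by noting, as a sanity check, that the four-point condition is equivalent to Gromov $0$-hyperbolicity (Definition~\ref{def:gromov_hyperbolicity} with $\delta=0$), so the argument also confirms the statement used later that tree-metrics are precisely the $0$-hyperbolic finite metrics on the vertex set.
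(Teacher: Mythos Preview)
The paper does not supply its own proof of this statement; it is quoted as ``Theorem~1 in~\cite{buneman1974note}'' and used as a black box, with the companion fact that $0$-hyperbolic metrics embed into trees attributed separately to~\cite{dress1984trees}. There is therefore no in-paper argument to compare your attempt against.

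On the merits: your forward direction (tree $\Rightarrow$ four-point condition) via the Steiner subtree on $\{x,y,z,r\}$ is the standard and correct argument, and handling degenerate topologies by collapsing edge weights to $0$ is fine. One caution on the backward direction: you dispose of it by reading ``contains no loops'' as ``acyclic,'' which makes connectivity plus acyclicity already the definition of a tree and renders the four-point hypothesis superfluous there. That reading is defensible given the paper's phrasing, but be aware that it sidesteps the substantive half of Buneman's theorem, namely that any metric satisfying the four-point condition is realized by \emph{some} weighted tree. If instead ``loops'' meant self-loops, the converse as stated would actually be false (add a redundant long edge to a path and the shortest-path metric is unchanged and still four-point, yet the graph has a cycle). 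Your closing remark that the argument ``confirms\ldots tree-metrics are precisely the $0$-hyperbolic finite metrics'' overclaims: you have established only one inclusion; the other is exactly the Dress/Buneman construction that neither you nor the paper prove.
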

Note that through some simple algebra, one can rewrite condition~\eqref{eq:four_points} for $\delta$-hyperbolicity as
\begin{align}\label{eq:four_points_2}
    & \forall x,y,z,r\in V,\quad d(x,y)+d(z,r) - 2\delta \nonumber\\
    & \leq \max \left\{d(x,z)+d(r,y),d(r,x)+d(y,z) \right\}.
\end{align}
It is also known that any $0$-hyperbolic metric space embeds isometrically into a metric-tree~\cite{dress1984trees}.

The second key idea is that condition~\eqref{eq:four_points} with a value of $\delta > 0$ may be viewed as a relaxation of the tree-metric constraint. The parameter $\delta$ can be viewed as the slackness variable, while a $\delta$-hyperbolic metric can be viewed as a ``soft'' tree-metric. Hence, the smaller the value of $\delta$, the closer the output metric to a tree-metric. Denoising therefore refers to the process of reducing $\delta$.

Note that direct testing of the condition~\eqref{eq:four_points} takes $O(n^4)$ time, which is prohibitively complex in practice. Checking the soft tree-metric constraint directly also takes $O(n^4)$ time, but to avoid this problem we exploit the connection between $\delta$-hyperbolic metric spaces and negative curvature spaces. This is our third key idea, which allows us to optimize the Problem~\ref{prob:l_p} satisfying the soft tree-metric constraint efficiently. More precisely, $\delta \times \sqrt{c} = \mathrm{const}$ (see Appendix~\ref{apx:pf_const}). Hence, a distance metric induced by a hyperbolic space of negative curvature $-c$ will satisfy the soft tree-metric constraint with slackness at most $\delta=\mathrm{const.}\times \frac{1}{\sqrt{c}}$. Therefore, using recent results pertaining to optimization in hyperbolic spaces (i.e., Riemannian SGD and Riemannian Adam~\cite{bonnabel2013stochastic,geoopt2020kochurov}), we can efficiently solve the Problem~\ref{prob:l_p} with a soft tree-metric constraint in a hyperbolic space. Combined with our previous argument, we demonstrate that finding the closest $\delta$-hyperbolic metric with $\delta$ small enough may be viewed as tree-metric denoising.

Based on $\delta \times \sqrt{c} = \mathrm{const}$, one many want to let $c\rightarrow \infty$. Unfortunately, optimization in hyperbolic spaces is known to have precision-quality tradeoff problems~\cite{pmlr-v80-sala18a,nickel2018learning}, and some recent works has attempted to address this issue~\cite{yu2019numerically,yu2021representing}. In practice, one should choose $c$ at a moderate scale ($\sim 100$ in our experiment). The loss optimized by the encoder in HyperAid equals
\begin{align}
    \min_{x_1,\ldots,x_n \in \mathbb{B}_c^d}\Big( \sum_{i,j\in [n] } |d_{\mathbb{B}_c^d}(x_i, x_j)-d_{i,j}|^p \Big)^{\frac{1}{p}},
\end{align}
where $p\geq 1$, and $d_{i,j}$ are dissimilarities between $n$ entities. 

A comparison between HyperAid and HypHC is once more in place. The purpose of using hyperbolic optimization in HypHC and our HyperAid is very different. HypHC aims to optimize the Dasgupta objective and introduces a continuous relaxation of this cost in hyperbolic space, based on continuous versions of lowest common ancestors (See Equation (9) in~\cite{chami2020trees}). In contrast, we leverage hyperbolic geometry for learning soft tree-metrics under $\ell_p$ loss, which denoises the input metrics to better fit ``tree-like'' measurements. The resulting hyperbolic metric can then be further used with any downstream tree-metric learning and inference method, and has therefore a level of universality not matched by HypHC whose goal is to perform HC. Our ablation study to follow shows that HypHC cannot offer performance improvements like our approach when combined with downstream tree-metric learners.

\subsection{Decoding the tree-metric}\label{sec:dec_part}
Tree-metric fitting methods produce trees based on arbitrary input metrics. In this context, the NJ~\cite{saitou1987neighbor} algorithm is probably the most frequently used algorithm, especially in the context of reconstructing phylogenetic trees. TreeRep~\cite{sonthalia2020tree}, a divide-and-conquer method, was also recently proposed in the machine learning literature. In contrast to linkage base methods that restrict their output to be ultrametrics, both NJ and TreeRep can output a general tree-metric. When the input metric is a tree-metric, TreeRep is guaranteed to recover it correctly in the absence of noise. It is also known that the local distortion of TreeRep is at most $\delta>0$ when the input metric is not a tree-metric but a $\delta$-hyperbolic metric. NJ can recover the ground truth tree-metric when input metric is a ``slightly'' perturbed tree-metric~\cite{atteson1997performance}. Note that methods that generate ultrametrics such as linkage-based methods and Ufit do not have similar accompanying theoretical results. 

The key idea of HyperAid is that if we are able to reduce $\delta$ for the input metric, both NJ, TreeRep (as well as other methods) should offer better performance and avoid pitfalls such as negative tree edge-weights. Since an ultrametric is a special case of a tree-metric, we also expect that our denoising process will offer empirically better performance for ultrametrics as well, which is confirmed by the results presented in the Experiments Section.

\textbf{Rooting the tree. } The NJ and TreeRep methods produce unrooted trees, which may be seen as a drawback for HC applications. This property of the resulting tree is a consequence of the fact that NJ does not assume constant rates of changes in the generative phenomena. The rooting issue may be resolved in practice through the use of outgroups, or the following simple solution known as \emph{mid-point rooting}. In mid-point rooting, the root is assumed to be the midpoint of the longest distance path between a pair of leaves; additional constraints may be used to ensure stability under limited leaf-removal. Rooting may also be performed based on priors, as is the case for outgroup rooting~\cite{hess2007empirical}.

\textbf{Complexity analysis. } We focus on the time complexity of our encoder module. Note that for the exact computation of the (Dasgupta) loss, HypHC takes $O(n^3)$ time, while our $\ell_p$ norm loss only takes $O(n^2)$ time. Nevertheless, as pointed out by the authors of HypHC, one can reduce the time complexity to $O(n^2)$ by sampling only one triplet per leaf. A similar approach can be applied in our encoder, resulting in a $O(n)$ time complexity. Hence, the time complexity of our HyperAid encoder is significantly lower than that of HypHC, as we focus on a different objective function. With respect to tree-metric and ultrametric fitting methods, we still have low-order polynomial time complexities. For example, NJ runs in $O(n^3)$ time, while an advanced implementation of UPGMA requires $O(n^2)$ time~\cite{murtagh1984complexities}.

\section{Experiments}\label{sec:exp}
To demonstrate the effectiveness of our HyperAid framework, we conduct extensive experiments on both synthetic and real-world datasets. We mainly focus on determining if denoised data leads to decoded trees that have a smaller losses  compared to those generated by decoders that operate on the input metrics directly (Direct). Due to space limitations, we only examine the $\ell_2$ norm loss, as this is the most common choice of objective in practice; HyperAid easily generalizes to arbitrary $p\in [1,\infty)$. An in-depth description of experiments is deferred to the Supplement. 

\textbf{Tree-metric methods. }We test both NJ and TreeRep methods as decoders. Note that TreeRep is a randomized algorithm so we repeat it $20$ times and pick the tree with lowest $\ell_2$ loss, as suggested in the original paper~\cite{sonthalia2020tree}. We also test the T-REX~\cite{boc2012t} version of NJ and Unweighted NJ (UNJ)~\cite{gascuel1997concerning}, both of which outperform NJ due to their sophisticated post-processing step. T-REX is not an open-source toolkit and we can only use it by interfacing with its website manually, which limits the size of the trees that can be tested; this is the current reason for reporting the performance of T-REX only for small-size real-world datasets.

\textbf{Ultrametric methods. }We also test if our framework can help ultrametric fitting methods, including linkage-based methods (single, complete, average, weight) and gradient-based ultrametric methods such as Ufit. For Ufit, we use the default hyperparameters provided by the authors~\cite{chierchia2020ultrametric}, akin to~\cite{chami2020trees}. 

\subsection{Synthetic datasets}
\begin{figure*}[th]
  \includegraphics[trim={5cm 13cm 6.5cm 2cm},clip,width=0.49\linewidth]{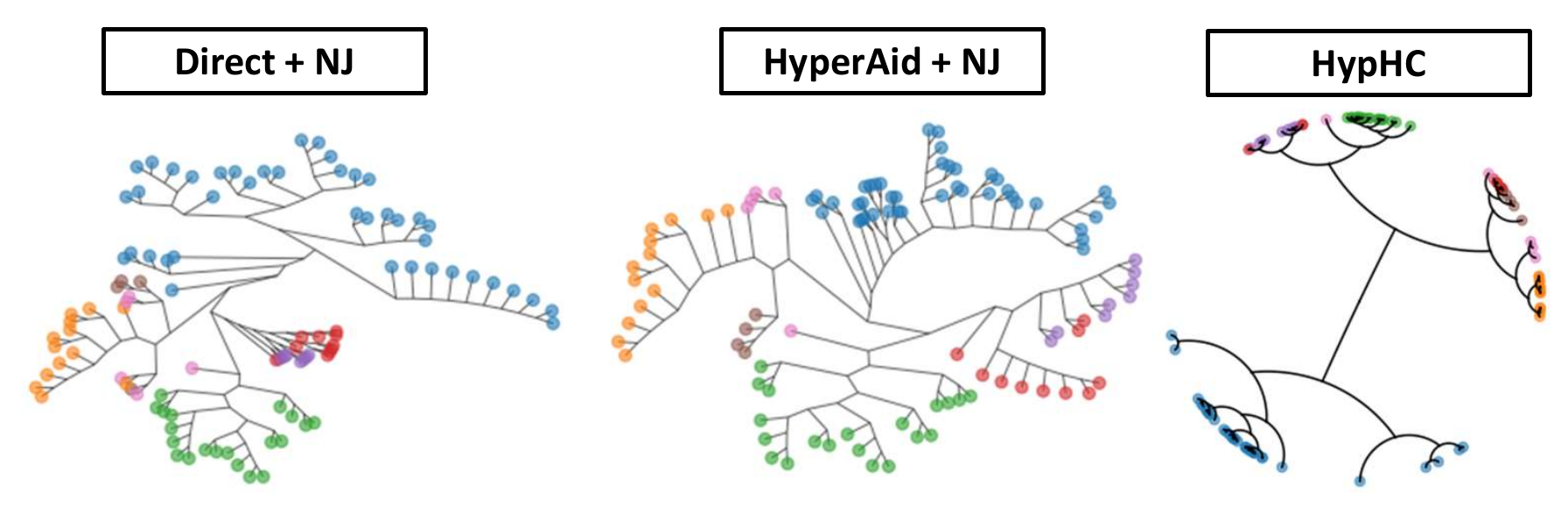}
  \includegraphics[trim={5cm 13cm 6.5cm 2cm},clip,width=0.49\linewidth]{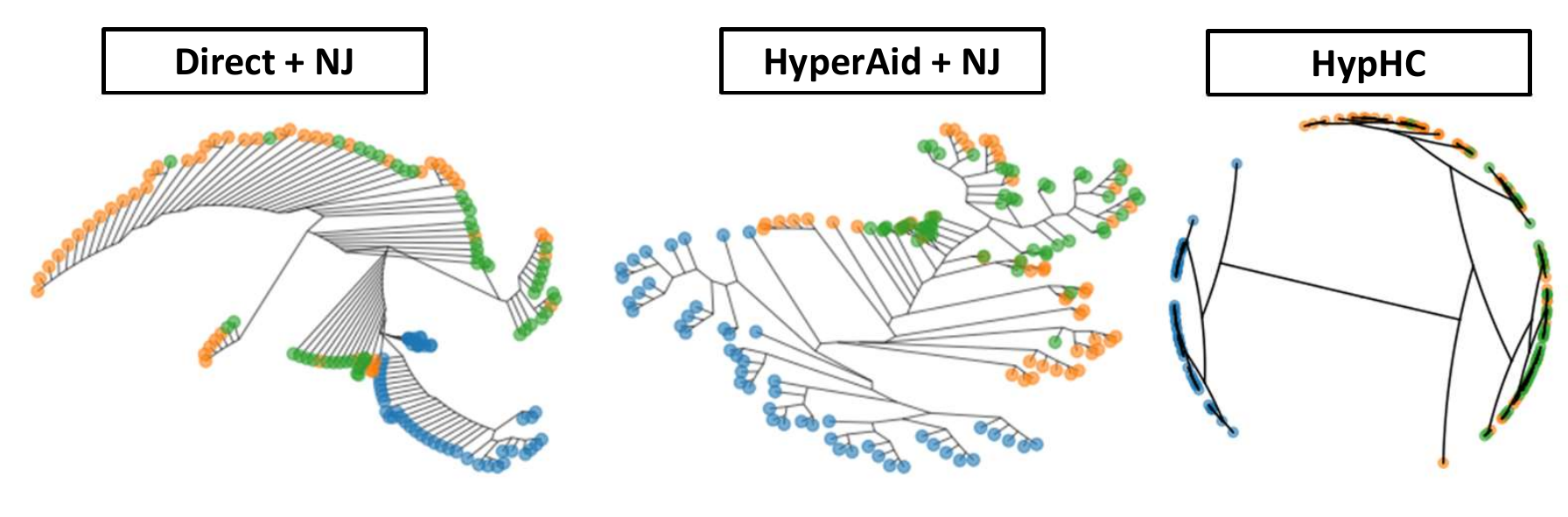}
  \vspace{-0.4cm}
  \caption{Visualization of the resulting trees for zoo (left) and iris (right). Vertex colors indicate the ground truth labels.}
  \label{fig:tree_visual}
  \vspace{-0.2cm}
\end{figure*}
\begin{table}[t]
\setlength{\tabcolsep}{1.0pt}
\caption{The $\ell_2$ norm loss of the resulting trees, averaged over three independent runs; $n$, $R$, and EL stand for the number of leaves, edge-noise ratio and the $\ell_2$ loss of the hyperbolic distances that we learned, respectively (listed for each dataset). Bold values indicate the best results among tree-metric methods and ultrametric methods. If additionally depicted in red, the values pertain to the best achievable performance. Grey shaded boxes indicate the better of the two results for Direct and HyperAid, for the same decoder. We also report the $\delta$ hyperbolicity of the raw input metric and our learned hyperbolic metric, as well as the values of Gain$=$ (Direct/HyperAid -1). Note that the first two decoders produce tree-metrics while the remaining ones result in ultrametrics.}
\vspace{-0.3cm}
\label{tab:synthetic}
\scriptsize
\begin{tabular}{@{}cccccccccc@{}}
\toprule
 &
  \multicolumn{3}{c}{$n=64,R=0.1$, EL$=32.93$} &
  \multicolumn{3}{c}{$n=64,R=0.3$, EL$=37.75$} &
  \multicolumn{3}{c}{$n=64,R=0.5$, EL$=31.36$} \\ \midrule
 &
  \begin{tabular}[c]{@{}c@{}}Direct\\ ($\delta:4.67$)\end{tabular} &
  \begin{tabular}[c]{@{}c@{}}HyperAid\\ ($\delta:3.14$)\end{tabular} &
  Gain (\%) &
  \begin{tabular}[c]{@{}c@{}}Direct\\ ($\delta:4.33$)\end{tabular} &
  \begin{tabular}[c]{@{}c@{}}HyperAid\\ ($\delta:2.60$)\end{tabular} &
  Gain (\%) &
  \begin{tabular}[c]{@{}c@{}}Direct\\ ($\delta:3.50$)\end{tabular} &
  \begin{tabular}[c]{@{}c@{}}HyperAid\\ ($\delta:2.05$)\end{tabular} &
  Gain (\%) \\ \midrule
NJ &
  139.57 &
  \cellcolor[HTML]{EFEFEF}{\color[HTML]{FE0000} \textbf{117.19}} &
  19.09 &
  97.14 &
  \cellcolor[HTML]{EFEFEF}{\color[HTML]{FE0000} \textbf{93.14}} &
  4.29 &
  80.92 &
  \cellcolor[HTML]{EFEFEF}{\color[HTML]{FE0000} \textbf{72.36}} &
  11.82 \\
TreeRep &
  167.59 &
  \cellcolor[HTML]{EFEFEF}150.37 &
  11.45 &
  169.06 &
  \cellcolor[HTML]{EFEFEF}138.28 &
  22.26 &
  131.52 &
  \cellcolor[HTML]{EFEFEF}97.15 &
  35.37 \\ \midrule
single &
  379.14 &
  \cellcolor[HTML]{EFEFEF}370.55 &
  2.31 &
  290.11 &
  \cellcolor[HTML]{EFEFEF}254.77 &
  13.87 &
  236.10 &
  \cellcolor[HTML]{EFEFEF}196.38 &
  20.22 \\
complete &
  376.24 &
  \cellcolor[HTML]{EFEFEF}355.54 &
  5.82 &
  293.06 &
  \cellcolor[HTML]{EFEFEF}279.55 &
  4.83 &
  281.92 &
  \cellcolor[HTML]{EFEFEF}231.41 &
  21.82 \\
average &
  \cellcolor[HTML]{EFEFEF}\textbf{148.68} &
  149.83 &
  -0.77 &
  108.05 &
  \cellcolor[HTML]{EFEFEF}\textbf{106.64} &
  1.32 &
  85.22 &
  \cellcolor[HTML]{EFEFEF}\textbf{83.63} &
  1.89 \\
weighted &
  162.35 &
  \cellcolor[HTML]{EFEFEF}150.86 &
  7.61 &
  132.96 &
  \cellcolor[HTML]{EFEFEF}130.85 &
  1.61 &
  111.55 &
  \cellcolor[HTML]{EFEFEF}104.41 &
  6.84 \\
% centroid &
%   217.90 &
%   NA &
%   NA &
%   140.97 &
%   NA &
%   NA &
%   105.34 &
%   NA &
%   NA \\
% median &
%   223.91 &
%   NA &
%   NA &
%   125.62 &
%   NA &
%   NA &
%   96.46 &
%   NA &
%   NA \\
Ufit &
  228.21 &
  \cellcolor[HTML]{EFEFEF}227.25 &
  0.42 &
  131.97 &
  \cellcolor[HTML]{EFEFEF}124.78 &
  5.76 &
  90.20 &
  \cellcolor[HTML]{EFEFEF}85.45 &
  5.56 \\ \midrule
 &
  \multicolumn{3}{c}{$n=128,R=0.1$, EL$=172.45$} &
  \multicolumn{3}{c}{$n=128,R=0.3$, EL$=122.12$} &
  \multicolumn{3}{c}{$n=128,R=0.5$, EL$=94.26$} \\ \midrule
 &
  \begin{tabular}[c]{@{}c@{}}Direct\\ ($\delta:6.67$)\end{tabular} &
  \begin{tabular}[c]{@{}c@{}}HyperAid\\ ($\delta:1.36$)\end{tabular} &
  Gain (\%) &
  \begin{tabular}[c]{@{}c@{}}Direct\\ ($\delta:4.83$)\end{tabular} &
  \begin{tabular}[c]{@{}c@{}}HyperAid\\ ($\delta:1.31$)\end{tabular} &
  Gain (\%) &
  \begin{tabular}[c]{@{}c@{}}Direct\\ ($\delta:4.00$)\end{tabular} &
  \begin{tabular}[c]{@{}c@{}}HyperAid\\ ($\delta:1.23$)\end{tabular} &
  Gain (\%) \\ \midrule
NJ &
  283.73 &
  \cellcolor[HTML]{EFEFEF}{\color[HTML]{FE0000} \textbf{259.38}} &
  9.38 &
  201.71 &
  \cellcolor[HTML]{EFEFEF}{\color[HTML]{FE0000} \textbf{181.52}} &
  11.12 &
  151.58 &
  \cellcolor[HTML]{EFEFEF}{\color[HTML]{FE0000} \textbf{142.86}} &
  6.09 \\
TreeRep &
  505.03 &
  \cellcolor[HTML]{EFEFEF}318.62 &
  58.50 &
  383.12 &
  \cellcolor[HTML]{EFEFEF}228.30 &
  67.81 &
  343.21 &
  \cellcolor[HTML]{EFEFEF}192.57 &
  78.22 \\ \midrule
single &
  936.98 &
  \cellcolor[HTML]{EFEFEF}767.25 &
  22.12 &
  688.52 &
  \cellcolor[HTML]{EFEFEF}519.70 &
  32.48 &
  563.58 &
  \cellcolor[HTML]{EFEFEF}431.50 &
  30.61 \\
complete &
  826.73 &
  \cellcolor[HTML]{EFEFEF}698.71 &
  18.32 &
  738.98 &
  \cellcolor[HTML]{EFEFEF}583.67 &
  26.60 &
  619.42 &
  \cellcolor[HTML]{EFEFEF}454.95 &
  36.14 \\
average &
  \cellcolor[HTML]{EFEFEF}\textbf{323.58} &
  326.26 &
  -0.82 &
  236.29 &
  \cellcolor[HTML]{EFEFEF}\textbf{234.92} &
  0.58 &
  182.29 &
  \cellcolor[HTML]{EFEFEF}\textbf{178.40} &
  2.18 \\
weighted &
  340.93 &
  \cellcolor[HTML]{EFEFEF}340.30 &
  0.18 &
  \cellcolor[HTML]{EFEFEF}301.30 &
  316.85 &
  -4.90 &
  \cellcolor[HTML]{EFEFEF}262.61 &
  273.47 &
  -3.97 \\
% centroid &
%   515.87 &
%   NA &
%   NA &
%   325.65 &
%   NA &
%   NA &
%   251.63 &
%   NA &
%   NA \\
% median &
%   542.23 &
%   NA &
%   NA &
%   297.59 &
%   NA &
%   NA &
%   207.78 &
%   NA &
%   NA \\ 
Ufit &
  681.52 &
  \cellcolor[HTML]{EFEFEF}556.37 &
  22.49 &
  388.47 &
  \cellcolor[HTML]{EFEFEF}303.14 &
  28.15 &
  231.69 &
  \cellcolor[HTML]{EFEFEF}198.29 &
  16.84 \\ \midrule
\multicolumn{1}{l}{} &
  \multicolumn{3}{c}{$n=256,R=0.1$, EL$=248.89$} &
  \multicolumn{3}{c}{$n=256,R=0.3$, EL$=219.41$} &
  \multicolumn{3}{c}{$n=256,R=0.5$, EL$=172.52$} \\ \midrule
 &
  \begin{tabular}[c]{@{}c@{}}Direct\\ ($\delta:7.33$)\end{tabular} &
  \begin{tabular}[c]{@{}c@{}}HyperAid\\ ($\delta:4.24$)\end{tabular} &
  Gain (\%) &
  \begin{tabular}[c]{@{}c@{}}Direct\\ ($\delta:5.83$)\end{tabular} &
  \begin{tabular}[c]{@{}c@{}}HyperAid\\ ($\delta:3.10$)\end{tabular} &
  Gain (\%) &
  \begin{tabular}[c]{@{}c@{}}Direct\\ ($\delta:4.33$)\end{tabular} &
  \begin{tabular}[c]{@{}c@{}}HyperAid\\ ($\delta:2.38$)\end{tabular} &
  Gain (\%) \\ \midrule
NJ &
  910.31 &
  \cellcolor[HTML]{EFEFEF}{\color[HTML]{FE0000} \textbf{642.68}} &
  41.64 &
  450.57 &
  \cellcolor[HTML]{EFEFEF}{\color[HTML]{FE0000} \textbf{420.97}} &
  7.03 &
  320.29 &
  \cellcolor[HTML]{EFEFEF}{\color[HTML]{FE0000} \textbf{312.83}} &
  2.38 \\
TreeRep &
  1484.36 &
  \cellcolor[HTML]{EFEFEF}1078.49 &
  37.63 &
  1125.81 &
  \cellcolor[HTML]{EFEFEF}751.72 &
  49.76 &
  925.65 &
  \cellcolor[HTML]{EFEFEF}551.10 &
  67.96 \\ \midrule
single &
  2343.78 &
  \cellcolor[HTML]{EFEFEF}2277.91 &
  2.89 &
  1666.06 &
  \cellcolor[HTML]{EFEFEF}1397.08 &
  19.25 &
  1313.64 &
  \cellcolor[HTML]{EFEFEF}1066.32 &
  23.19 \\
complete &
  2186.13 &
  \cellcolor[HTML]{EFEFEF}2024.22 &
  7.99 &
  1786.13 &
  \cellcolor[HTML]{EFEFEF}1532.99 &
  16.51 &
  1565.70 &
  \cellcolor[HTML]{EFEFEF}1390.28 &
  12.61 \\
average &
  758.35 &
  \cellcolor[HTML]{EFEFEF}\textbf{756.91} &
  0.18 &
  \cellcolor[HTML]{EFEFEF}\textbf{521.83} &
  521.91 &
  -0.01 &
  392.09 &
  \cellcolor[HTML]{EFEFEF}\textbf{386.87} &
  1.35 \\
weighted &
  794.90 &
  \cellcolor[HTML]{EFEFEF}776.84 &
  2.32 &
  \cellcolor[HTML]{EFEFEF}681.73 &
  683.01 &
  -0.18 &
  626.06 &
  \cellcolor[HTML]{EFEFEF}600.36 &
  4.27 \\
% centroid &
%   1297.05 &
%   NA &
%   NA &
%   807.56 &
%   NA &
%   NA &
%   582.97 &
%   NA &
%   NA \\
% median &
%   1226.65 &
%   NA &
%   NA &
%   716.86 &
%   NA &
%   NA &
%   477.46 &
%   NA &
%   NA \\ 
Ufit &
  2026.42 &
  \cellcolor[HTML]{EFEFEF}1974.59 &
  2.62 &
  1206.72 &
  \cellcolor[HTML]{EFEFEF}1008.93 &
  19.60 &
  774.92 &
  \cellcolor[HTML]{EFEFEF}641.07 &
  20.88 \\ \bottomrule
\end{tabular}
\vspace{-0.5cm}
\end{table}

Let $R\in \{0.1,0.3,0.5\}$ be what we refer to the \emph{edge-noise rate}, which controls the number of random edges added to a tree as a means of introducing perturbations. We generate a random binary tree and then add $R\times (2n-2)$ random edges between vertices, including both leaves and internal vertices. The input metric to our encoder is the collection of shortest path distances between leaves. We generate such corrupted random binary trees with $n\in \{64,128,256\}$. All the results are averages over three independent runs with the same set of input metrics. Note that this edge-noise model results in graph-generated similarities which may not have full generality but are amenable to simple evaluations of HyperAid in terms of detecting spurious noisy edges and removing their effect from the resulting tree-fits or hierarchies.

It is easy to see that our framework consistently improves the performance of tree-metric learning methods, with gains as large as $41\%$ for NJ and $78\%$ for TreeRep (and $12.54\%$ and $47.66\%$ on average, respectively). Furthermore, one can see that the hyperbolic metric learned by HyperAid indeed results in smaller $\delta$-hyperbolicity values compared to the input. For ultrametric methods, our framework frequently leads to improvements but not of the same order as those seen for general tree-metrics. We also test the centroid, median and ward linkage methods for raw input metric data, but given that these methods requires the input metric to be Euclidean we could not combine them with HyperAid. Nevertheless, their results are significantly worse than those offered by our framework and thus not included in Table~\ref{tab:synthetic}.

\vspace{-0.15cm}
\subsection{Real-world datasets}\label{sec:exp_real}
We examine the effectiveness of our HyperAid framework on five standard datasets from the UIC Machine Learning repository~\cite{Dua:2019}. We use $1-$cosine similarities as the raw input metrics. The results are listed in Table~\ref{tab:realworld}.

One can see that the HyperAid framework again consistently and significantly improves the performance of all downstream methods (reaching gains in excess of $120\%$). Surprisingly, we find that average linkage, the best ultrametric fitting method, achieves smaller $\ell_2$ norm losses on the Spambase dataset when compared to tree-metric methods like NJ. We conjecture two possible explanations: 1) We have not explored more powerful tree-metric reconstruction methods like T-REX. Again, the reason for not being able to make full use of T-REX is the fact that the software requires interactive data loading and processing; 2) The optimal tree-metric is very close to an ultrametric. %Recall that we are not able to test T-REX on these large datasets as their website do not accept uploading large distance matrices. 
It is still worth pointing out that despite the fact that the ultrametric fits have smaller loss than tree-metric fits, even in the former case HyperAid consistently improved all tested algorithms on real-world datasets.

\vspace{-0.15cm}
\subsection{Ablation study}

While minimizing the $\ell_2$ loss in the encoding process of the HyperAid framework is an intuitively good choice given that our final tree-metric fit is evaluated based on the $\ell_2$ loss, it is still possible to choose other objective functions (such as the Dasguptas loss) for the encoder module. We therefore compare the performance of the hyperbolic metric learned using HypHC, centered on the Dasgupta objective, with the performance of our HyperAid encoder coupled with several possible decoding methods. For HypHC, we used the official implementation and hyperparameters provided by the authors~\cite{chami2020trees}. The results are listed in Table~\ref{tab:ablation} in the Supplement. The results reveal that minimizing the $\ell_2$ loss during hyperbolic embedding indeed offers better results than those obtained by using the Dasgupta loss when our ultimate performance criterion is the $\ell_2$ loss. Note that the default decoding rule of HypHC produces unweighted trees and hence offers worse performance than the tested mode (and the results are therefore omitted from the table).

\vspace{-0.15cm}
\subsection{Tree visualization and analysis}

One conceptually straightforward method for evaluating the quality of the hierarchical clustering is to use the ground truth labels of object classes available for real-world datasets such as Zoo and Iris.

The Zoo dataset comprises $7$ classes of the animal kingdom, including insects, cartilaginous and bony fish, amphibians, reptiles, birds and mammals. The hierarchical 
clusterings based on NJ, HyperAid+NJ and HypHC reveal that the first and last method tend to misplace amphibians and bony fish as well as insects and cartilaginous fish by placing them in 
the same subtrees. The NJ method also shows a propensity to create ``path-like trees'' which do not adequately capture known phylogenetic information (e.g., for mammals in particular). 
On the other hand, HyperAid combined with NJ discriminates accurately between all seven classes with the exception of two branches of amphibian species.  

The Iris dataset includes three groups of iris flowers, Iris-setosa, Iris-versicolor and Iris-virginica. All three methods perfectly identify the subtrees of Iris-setosa species, but once again the NJ method produces a ``path-like tree'' for all three groups which does not conform to nonlinear models of evolution. It is well-known that it is hard to distinguish Iris versicolor and Iris virginica flowers as both have similar flower color and growth/bloom times. As a result, all three methods result in subtrees containing intertwined clusters of flowers from both groups.

\begin{table*}[t]
\setlength{\tabcolsep}{1.0pt}
\caption{The $\ell_2$ norm loss of the resulting trees (see the caption of Table~\ref{tab:synthetic} for the format description). In addition, we use $\delta^\star$ to indicate that the $\delta$ value is approximated by random sampling of one million quadruples of leaves. NA refers to ``not applicable''. Note that the first four decoders produce tree-metrics while the remaining produce ultrametrics.}
\vspace{-0.2cm}
\label{tab:realworld}
\scriptsize
\begin{tabular}{@{}ccccccccccccccccc@{}}
\toprule
 &
  \multicolumn{3}{c}{Zoo, $n=101$, EL$=7.85$} &
  \multicolumn{3}{c}{Iris, $n=150$, EL$=15.64$} &
  \multicolumn{3}{c}{Glass, $n=214$, EL$=27.58$} &
  \multicolumn{3}{c}{Segmentation, $n=2310$, EL$=248.40$} &
  \multicolumn{3}{c}{Spambase, $n=4601$, EL$=255.22$} &
  Average Gain (\%)\\ \midrule
 &
  \begin{tabular}[c]{@{}c@{}}Direct\\ ($\delta=0.349$)\end{tabular} &
  \begin{tabular}[c]{@{}c@{}}HyperAid\\ ($\delta=0.048$)\end{tabular} &
  Gain (\%) &
  \begin{tabular}[c]{@{}c@{}}Direct\\ ($\delta=0.489$)\end{tabular} &
  \begin{tabular}[c]{@{}c@{}}HyperAid\\ ($\delta=0.062$)\end{tabular} &
  Gain (\%) &
  \begin{tabular}[c]{@{}c@{}}Direct\\ ($\delta=0.453$)\end{tabular} &
  \begin{tabular}[c]{@{}c@{}}HyperAid\\ ($\delta=0.044$)\end{tabular} &
  Gain (\%) &
  \begin{tabular}[c]{@{}c@{}}Direct\\ ($\delta^\star=0.389$)\end{tabular} &
  \begin{tabular}[c]{@{}c@{}}HyperAid\\ ($\delta^\star=0.133$)\end{tabular} &
  Gain (\%) &
  \begin{tabular}[c]{@{}c@{}}Direct\\ ($\delta^\star=0.234$)\end{tabular} &
  \begin{tabular}[c]{@{}c@{}}HyperAid\\ ($\delta^\star=0.065$)\end{tabular} &
  Gain (\%) \\ \midrule
NJ &
  11.77 &
  \cellcolor[HTML]{EFEFEF}8.99 &
  30.94 &
  43.85 &
  \cellcolor[HTML]{EFEFEF}18.17 &
  141.22 &
  60.96 &
  \cellcolor[HTML]{EFEFEF}30.76 &
  98.15 &
  856.47 &
  \cellcolor[HTML]{EFEFEF}{\color[HTML]{FE0000} \textbf{292.78}} &
  192.53 &
  1008.6 &
  \cellcolor[HTML]{EFEFEF}{\textbf{377.93}} &
  166.87 &
  125.94\\
TreeRep &
  14.65 &
  \cellcolor[HTML]{EFEFEF}9.32 &
  57.12 &
  22.59 &
  \cellcolor[HTML]{EFEFEF}19.00 &
  18.89 &
  71.81 &
  \cellcolor[HTML]{EFEFEF}32.00 &
  124.41 &
  1357.28 &
  \cellcolor[HTML]{EFEFEF}1038.98 &
  30.64 &
  2279.31 &
  \cellcolor[HTML]{EFEFEF}927.02 &
  145.87 &
  75.38\\
NJ(T-REX) &
  8.85 &
  \cellcolor[HTML]{EFEFEF}8.59 &
  3.09 &
  26.38 &
  \cellcolor[HTML]{EFEFEF}16.96 &
  55.51 &
  37.61 &
  \cellcolor[HTML]{EFEFEF}29.58 &
  27.11 &
  NA &
  NA &
  NA &
  NA &
  NA &
  NA &
  28.57\\
UNJ(T-REX) &
  8.60 &
  \cellcolor[HTML]{EFEFEF}{\color[HTML]{FE0000} \textbf{8.57}} &
  0.31 &
  17.47 &
  \cellcolor[HTML]{EFEFEF}{\color[HTML]{FE0000} \textbf{16.86}} &
  3.58 &
  29.41 &
  \cellcolor[HTML]{EFEFEF}{\color[HTML]{FE0000} \textbf{29.38}} &
  0.09 &
  NA &
  NA &
  NA &
  NA &
  NA &
  NA &
  1.33\\ \midrule
single &
  34.53 &
  \cellcolor[HTML]{EFEFEF}17.17 &
  101.11 &
  84.66 &
  \cellcolor[HTML]{EFEFEF}65.97 &
  28.33 &
  95.91 &
  \cellcolor[HTML]{EFEFEF}52.82 &
  81.55 &
  1174.95 &
  \cellcolor[HTML]{EFEFEF}924.76 &
  27.06 &
  1809.83 &
  \cellcolor[HTML]{EFEFEF}1109.75 &
  63.08 &
  60.23\\
complete &
  20.78 &
  \cellcolor[HTML]{EFEFEF}10.64 &
  95.27 &
  58.13 &
  \cellcolor[HTML]{EFEFEF}30.95 &
  87.80 &
  88.19 &
  \cellcolor[HTML]{EFEFEF}37.75 &
  133.61 &
  928.63 &
  \cellcolor[HTML]{EFEFEF}486.31 &
  90.96 &
  1248.15 &
  \cellcolor[HTML]{EFEFEF}633.70 &
  96.96 &
  100.92\\
average &
  10.04 &
  \cellcolor[HTML]{EFEFEF}{ \textbf{9.17}} &
  9.47 &
  23.38 &
  \cellcolor[HTML]{EFEFEF}{ \textbf{22.99}} &
  1.68 &
  31.94 &
  \cellcolor[HTML]{EFEFEF}{ \textbf{30.79}} &
  3.73 &
  339.41 &
  \cellcolor[HTML]{EFEFEF}{ \textbf{300.07}} &
  13.11 &
  365.78 &
  \cellcolor[HTML]{EFEFEF}{\color[HTML]{FE0000} \textbf{350.68}} &
  4.31 &
  6.46\\
weighted &
  12.08 &
  \cellcolor[HTML]{EFEFEF}11.05 &
  9.33 &
  \cellcolor[HTML]{EFEFEF}27.20 &
  29.07 &
  -6.42 &
  34.75 &
  \cellcolor[HTML]{EFEFEF}34.16 &
  1.74 &
  458.17 &
  \cellcolor[HTML]{EFEFEF}337.75 &
  35.66 &
  385.37 &
  \cellcolor[HTML]{EFEFEF}382.13 &
  0.84 &
  8.22\\
Ufit &
  10.53 &
  \cellcolor[HTML]{EFEFEF}9.35 &
  12.70 &
  26.02 &
  \cellcolor[HTML]{EFEFEF}23.02 &
  13.03 &
  33.21 &
  \cellcolor[HTML]{EFEFEF}31.04 &
  6.98 &
  354.66 &
  \cellcolor[HTML]{EFEFEF}314.08 &
  12.92 &
  381.68 &
  \cellcolor[HTML]{EFEFEF}377.67 &
  1.06 &
  9.34\\ \bottomrule
\end{tabular}
\vspace{-0.2cm}
\end{table*}
\vspace{-0.1cm}
%%
%% The acknowledgments section is defined using the "acks" environment
%% (and NOT an unnumbered section). This ensures the proper
%% identification of the section in the article metadata, and the
%% consistent spelling of the heading.
\begin{acks}
The work was supported by the NSF CIF program under grant number 19566384.
\end{acks}

%%
%% The next two lines define the bibliography style to be used, and
%% the bibliography file.
\bibliographystyle{ACM-Reference-Format}
\bibliography{sample-base}

%%
%% If your work has an appendix, this is the place to put it.
\appendix

\section{Proof of Proposition~\ref{prop:convex}}
\begin{proposition}
The objective in Problem~\ref{prob:l_p} is a convex function of the edge weights of the tree $T$.	
\end{proposition}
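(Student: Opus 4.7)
The plan is to exploit the fact that, for a fixed topology, the pairwise leaf distance function decomposes linearly in the edge weights, and then recognize the cost as the composition of the $\ell_p$ norm with an affine map. Concretely, for a fixed binary topology $T$ with edge-weight vector $w \in \mathbb{R}_+^{2n-2}$, the distance $d_T(v_i,v_j)$ between any pair of leaves equals the sum of the weights along the unique path between them. Stacking these path-sums into a vector of length $\binom{n}{2}$ gives exactly the relation $d_T = A_T w$ stated in Remark~1, where $A_T \in \{0,1\}^{\binom{n}{2}\times(2n-2)}$ is the path-edge incidence matrix determined by the topology alone. Hence $w \mapsto d_T - d$, with $d$ the vector of input dissimilarities $(d_{i,j})$, is an affine function of $w$.

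Next, I would use standard convexity facts. For $p \ge 1$, the $\ell_p$ norm $\|\cdot\|_p : \mathbb{R}^{\binom{n}{2}} \to \mathbb{R}_+$ is a norm and therefore convex. Composing a convex function with an affine function preserves convexity, so
\[
w \;\longmapsto\; \|A_T w - d\|_p \;=\; \Bigl(\sum_{i,j \in [n]} |d_T(v_i,v_j) - d_{i,j}|^p\Bigr)^{1/p} \;=\; \mathrm{cost}_p(T)
\]
is convex in $w$, which is exactly the claim.

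If one prefers a more elementary route that avoids quoting norm-convexity as a black box, I would proceed in three micro-steps: (i) for each pair $(i,j)$, the scalar $d_T(v_i,v_j) - d_{i,j}$ is affine in $w$; (ii) the map $t \mapsto |t|^p$ is convex on $\mathbb{R}$ for $p \ge 1$ (for $p=1$ by the triangle inequality, for $p>1$ by nonnegative second derivative), so each term $|d_T(v_i,v_j) - d_{i,j}|^p$ is convex in $w$ as a composition of convex-with-affine; (iii) a nonnegative sum of convex functions is convex, so the sum inside the parentheses is convex. Finally, since $(\cdot)^{1/p}$ is concave and increasing on $\mathbb{R}_+$, one cannot simply compose and conclude; this is precisely where the norm viewpoint is cleaner. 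I would therefore close the argument by invoking that $\|\cdot\|_p$ is convex (Minkowski's inequality), giving convexity of the outer $1/p$-power composition as well.

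There is no real obstacle here; the only subtle point is the outer exponent $1/p$, which makes the ``sum of $|{\cdot}|^p$'' argument insufficient on its own and forces one to use Minkowski's inequality (equivalently, convexity of the $\ell_p$ norm). I would highlight this subtlety explicitly and note that the nonnegativity constraint $w \in \mathbb{R}_+^{2n-2}$ defines a convex feasible set (an orthant), so the optimization in Remark~1 is a convex program for any fixed topology.
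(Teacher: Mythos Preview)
Your proposal is correct and follows essentially the same approach as the paper: both recognize that for a fixed topology the cost is $\|A_T w - d\|_p$, an $\ell_p$ norm composed with an affine map in $w$, and conclude convexity via the triangle (Minkowski) inequality for $p\ge 1$. The paper's proof simply writes out the one-line verification $f(\lambda w+(1-\lambda)v)\le \lambda f(w)+(1-\lambda)f(v)$ directly, whereas you phrase it as ``convex composed with affine'' and add the helpful side remark about why the term-by-term $|\cdot|^p$ argument alone does not close the outer $1/p$ power.
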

\begin{proof}
	The objective in Problem~\ref{prob:l_p} can be written as\
	\[
		\min_{A_T \in \mathcal{A} , w \in \mathbb{R}^{2n-2}_{+} } \| A_T w - d\|_p
	\]
	where $\mathcal{A}$ is the set of admissible linear operators that map edge weights to additive pairwise distances on a binary tree with $N$ leaves. Let $f(w) = \| A_T w - d\|_p$. For a $\lambda \in [0, 1]$, we have
	\begin{align*}
		f(\lambda w + (1-\lambda )v ) &= \| \lambda A_T w + (1- \lambda) A_T v - d\|_p \\
		 &= \| \lambda (A_T w-d) + (1- \lambda) (A_T v - d)\|_p \\
		 &\leq \lambda \| A_T w-d\|_{p} + (1- \lambda)\|A_T v - d\|_p \\
		 &\leq  \lambda f(w) + (1- \lambda)f(v).
	\end{align*}
	This result follows from the convexity of the $\ell_p$ norm for $p \geq 1$.
\end{proof}

\section{Proof of $\delta \times \sqrt{c} = \mathrm{const}$}\label{apx:pf_const}
\begin{theorem}
  $\delta \times \sqrt{c} = \mathrm{const}$.
\end{theorem}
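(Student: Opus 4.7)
The plan is to reduce the claim to a single known fact: the standard Poincar\'e ball with curvature $-1$ is $\delta_0$-hyperbolic for some universal constant $\delta_0$ (independent of $c$ and of the particular points chosen). The entire theorem will then follow from two elementary observations: (i) the metric $d_{\mathbb{B}^d_c}$ is, up to a change of variables, just a rescaling of $d_{\mathbb{B}^d_1}$, and (ii) rescaling a metric by a positive scalar $\alpha$ rescales its Gromov hyperbolicity constant by exactly the same $\alpha$.

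First, I would establish the rescaling identity. Note that $\mathbb{B}^d_c = \{x \in \mathbb{R}^d : \sqrt{c}\|x\| < 1\}$, so the map $\Phi_c : \mathbb{B}^d_c \to \mathbb{B}^d_1$ defined by $\Phi_c(x) = \sqrt{c}\,x$ is a bijection. Substituting $x' = \sqrt{c}\,x$ and $y' = \sqrt{c}\,y$ into the formula for $d_{\mathbb{B}^d_c}$ yields
\begin{align*}
    d_{\mathbb{B}^d_c}(x,y) = \frac{1}{\sqrt{c}}\cosh^{-1}\!\left(1+\frac{2\|x'-y'\|^2}{(1-\|x'\|^2)(1-\|y'\|^2)}\right) = \frac{1}{\sqrt{c}}\, d_{\mathbb{B}^d_1}(\Phi_c(x),\Phi_c(y)),
\end{align*}
since the formula for $d_{\mathbb{B}^d_1}$ is obtained by setting $c=1$ in the defining expression. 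Thus $(\mathbb{B}^d_c, d_{\mathbb{B}^d_c})$ and $(\mathbb{B}^d_1, \tfrac{1}{\sqrt{c}}\,d_{\mathbb{B}^d_1})$ are isometric via $\Phi_c$.

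Next, I would verify that if $d$ is $\delta$-hyperbolic on $V$ then for any $\alpha>0$ the rescaled metric $\alpha d$ is $(\alpha\delta)$-hyperbolic on $V$. This is immediate from the definition of the Gromov product: multiplying every pairwise distance by $\alpha$ multiplies every Gromov product by $\alpha$, so both sides of the four-point inequality
\begin{align*}
(x,y)_r \ \geq\ \min\bigl((x,z)_r,\,(y,z)_r\bigr) - \delta
\end{align*}
scale by $\alpha$, which forces the slackness constant to scale by $\alpha$ as well; this scaling is tight because the infimum defining the smallest valid $\delta$ is achieved by a quadruple whose product relationships are preserved under rescaling.

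Finally, combining the two observations: the standard Poincar\'e ball $\mathbb{B}^d_1$ with its native metric is $\delta_0$-hyperbolic for a universal constant $\delta_0$ (this is the classical Gromov hyperbolicity of constant negative curvature spaces, with $\delta_0 = \log(1+\sqrt{2})$ in the usual normalization). By the isometry above, $\mathbb{B}^d_c$ inherits the hyperbolicity of $\mathbb{B}^d_1$ rescaled by $\tfrac{1}{\sqrt{c}}$, so its hyperbolicity constant is $\delta = \delta_0/\sqrt{c}$, giving $\delta\cdot\sqrt{c} = \delta_0 = \text{const}$. The only nontrivial ingredient here is the known value of $\delta_0$ for unit-curvature hyperbolic space, which I would simply cite rather than rederive; the rest is bookkeeping with the change of variables.
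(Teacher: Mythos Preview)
Your proposal is correct and follows essentially the same approach as the paper: both define the scaling bijection between $\mathbb{B}^d_1$ and $\mathbb{B}^d_c$, invoke the finite Gromov hyperbolicity of the unit-curvature Poincar\'e ball, and use the linear scaling of $\delta$ under metric rescaling to conclude. Your version is in fact more detailed than the paper's---you explicitly carry out the substitution verifying $d_{\mathbb{B}^d_c}=\tfrac{1}{\sqrt{c}}\,d_{\mathbb{B}^d_1}\circ\Phi_c$, justify the scaling of $\delta$ from the Gromov product, and even supply the explicit constant $\delta_0=\log(1+\sqrt{2})$---whereas the paper simply asserts the distance identity and the linear scaling.
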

\begin{proof}
Let us define a one-to-one map $\phi: \mathbb{B}_{1}^d  \rightarrow \mathbb{B}_{c}^d$, where $\phi(x) = \frac{1}{\sqrt{c}}x$. The Gromov hyperbolicity of the Poincar\'e ball is finite~\cite{vaisala2005gromov} and it scales linearly with the pairwise distances; see Definition \ref{def:gromov_hyperbolicity}. Since $d\big( \phi(x),\phi(y) \big) = \frac{1}{\sqrt{c}}d(x,y)$ for all $x,y \in \mathbb{B}_{1}^{d}$, the Gromov hyperbolicity of $\mathbb{B}_{c}^{d}$ equals $\mathrm{const.} \times \frac{1}{\sqrt{c}}$.
\end{proof}

\section{Additional experiment details}
\subsection{Detailed description of the encoder}
We implemented our framework in Python. The encoder part is developed in Pytorch, and is a modification based on the official implementation of HypHC\footnote{https://github.com/HazyResearch/HypHC}.

\textbf{Training details. } Note that all hyperparameters of each dataset are described in our implementation. Here, we briefly describe the most important ones. All the hyperparameters are tuned based on the loss of Hyp+NJ. Note that since we are solving a search problem, this is a standard approach -- for combinatorial search algorithms, one usually reports the best loss for different random seeds. HypHC also report the best loss results over several random seeds~\cite{chami2020trees}. 

When training hyperbolic embeddings in the Poincar\'e model, we follow the suggestion stated in~\cite{nickel2017poincare}. We initialize the embeddings within a small radius \verb|1e-6| with small learning rate for a few (\verb|burnin|) epochs, then increase the learning rate by a \verb|burnin_factor|. We also tune the negative curvature \verb|c| for different datasets. As mentioned in the main text, although choosing larger \verb|c| makes the resulting hyperbolic metric more similar to the tree-metric (i.e. $\delta$ becomes smaller based on the relation $\delta \times \sqrt{c} = \mathrm{const}$), it can make the optimization process harder and can potentially cause numerical issues. For synthetic datasets, we find that scaling the input metric by a factor \verb|scaling_factor| during training (and scale it back before decoding) can help with improving the final results. We conjecture that this is due to the numerical precision issues associated with hyperbolic embeddings approaches for which the resulting points are close to the boundary of Poincar\'e ball. We use Riemannian Adam implemented by the authors of HypHC as our optimizer. 

\subsection{Detailed description of the decoder}
For NJ and TreeRep, we use the implementation from the TreeRep paper\footnote{\url{https://github.com/rsonthal/TreeRep}}. We manually interact with the T-REX website\footnote{www.trex.uqam.ca} for NJ-TREX and UNJ-TREX. For linkage-based methods, we use the scipy library\footnote{\url{https://docs.scipy.org/doc/scipy/reference/generated/scipy.cluster.hierarchy.linkage.html}}. For UFit, we use the official implementation\footnote{https://github.com/PerretB/ultrametric-fitting} along with default hyperparameters.

\textbf{Remarks. }The NJ implementation, which is based on the PhyloNetworks library\footnote{http://crsl4.github.io/PhyloNetworks.jl/latest} in Julia, is a naive version of the method. It is known from the literature that such versions of NJ can potentially produce negative edge weights\footnote{\url{https://en.wikipedia.org/wiki/Neighbor_joining/\#Advantages_and_disadvantages}}. In the naive version of NJ, one simply replaces the negative edge-weights by $0$. On the other hand, a more sophisticated cleaning approach is implemented in T-REX, which is the reason why the T-REX version offers better performance. Note that TreeRep is also known to have the negative edge-weights problem, where the naive correction (replacing them with $0$) was adopted by the authors~\cite{sonthalia2020tree}.

\subsection{Additional results}
\begin{figure}[t]
  \includegraphics[trim={0cm 13cm 12cm 2cm},clip,width=\linewidth]{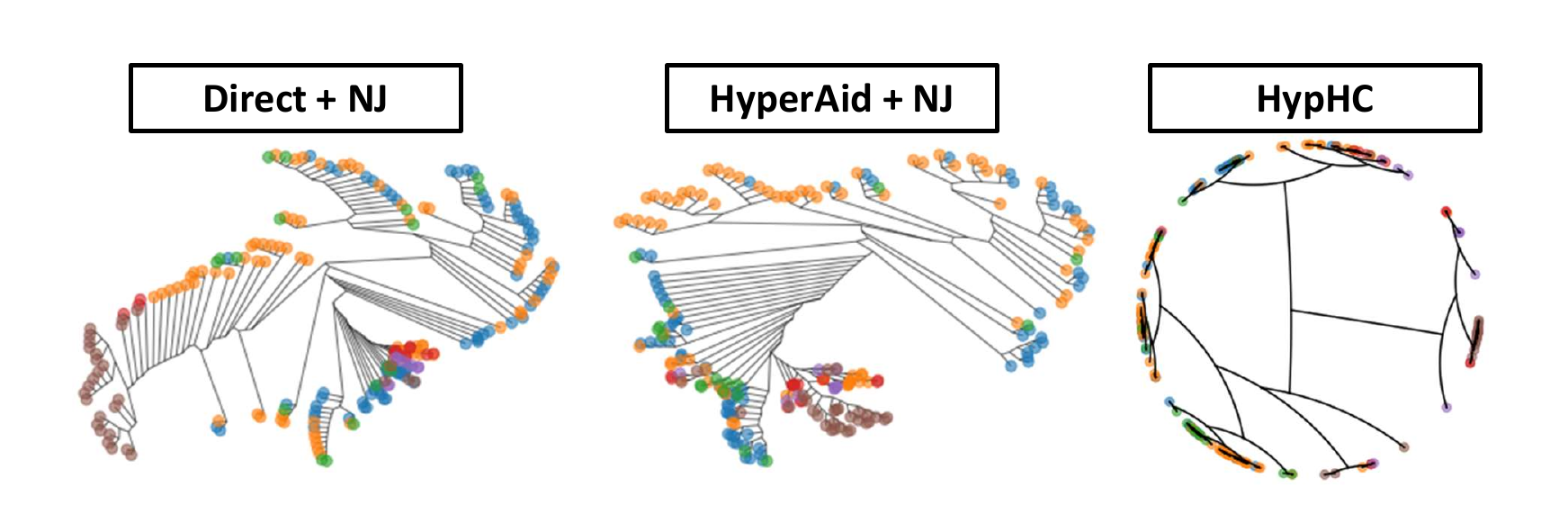}
  \caption{Visualization of resulting trees for Glass. Vertex colors indicate the ground truth labels.}
  \label{fig:tree_visual_glass}
\end{figure}

\begin{table}[t]
\caption{The $\ell_2$ losses of resulting trees generated as part of the ablation study pertaining to methods for generating hyperbolic embeddings.}
\label{tab:ablation}
\scriptsize
\begin{tabular}{@{}ccccccc@{}}
\toprule
         & \multicolumn{2}{c}{Zoo}               & \multicolumn{2}{c}{Iris}              & \multicolumn{2}{c}{Glass}              \\ \midrule
         & HyperAid                          & HypHC & HyperAid                          & HypHC & HyperAid                          & HypHC  \\ \midrule
NJ       & \cellcolor[HTML]{EFEFEF}11.77 & 31.55 & \cellcolor[HTML]{EFEFEF}18.18 & 70.43 & \cellcolor[HTML]{EFEFEF}30.76 & 87.75  \\
TreeRep  & \cellcolor[HTML]{EFEFEF}14.65 & 28.51 & \cellcolor[HTML]{EFEFEF}19.00 & 64.61 & \cellcolor[HTML]{EFEFEF}32.00 & 80.31  \\
single   & \cellcolor[HTML]{EFEFEF}34.53 & 45.01 & \cellcolor[HTML]{EFEFEF}65.97 & 85.40 & \cellcolor[HTML]{EFEFEF}52.83 & 108.44 \\
complete & \cellcolor[HTML]{EFEFEF}20.78 & 24.01 & \cellcolor[HTML]{EFEFEF}30.96 & 59.60 & \cellcolor[HTML]{EFEFEF}37.75 & 69.54  \\
average  & \cellcolor[HTML]{EFEFEF}10.05 & 33.31 & \cellcolor[HTML]{EFEFEF}23.00 & 72.36 & \cellcolor[HTML]{EFEFEF}30.79 & 90.58  \\
weighted & \cellcolor[HTML]{EFEFEF}12.09 & 32.97 & \cellcolor[HTML]{EFEFEF}29.07 & 71.00 & \cellcolor[HTML]{EFEFEF}34.16 & 87.15  \\
Ufit     & \cellcolor[HTML]{EFEFEF}10.54 & 33.07 & \cellcolor[HTML]{EFEFEF}23.03 & 73.19 & \cellcolor[HTML]{EFEFEF}31.04 & 90.34  \\ \bottomrule
\end{tabular}
\end{table}

\textbf{Tree visualization for the Glass dataset. }Clustering results obtained for the Glass dataset comprising seven classes of material are hardest to interpret, as all three methods only accurately cluster tableware glass. HypAid appears to better delineate between building-windows glass that is float-processed from building-windows glass that is nonfloat-processed.

\end{document}